\title[Unleashing Linear Optimizers for Group-Fair Learning and Optimization]{Unleashing Linear Optimizers for Group-Fair Learning and Optimization}
\def\reals{{\mathbb R}}
\def\calW{\mathcal{W}}
\newcommand{\eps}{\epsilon}
\newcommand{\E}{\mathop{\mathbb{E}}}
\newcommand{\err}{\mathrm{err}}
\newcommand{\pmin}{p_{\text{min}}}
\def\calC{\mathcal{C}}
\def\calS{\mathcal{S}} 
\def\calX{\mathcal{X}}
\def\calZ{\mathcal{Z}}
\def\ERR{\mathrm{ERR}}
\def\PRE{\mathrm{PRE}}
\def\REC{\mathrm{REC}}
\def\FONE{\mathrm{F1}}
\def\FP{\mathrm{FP}}
\def\FPR{\mathrm{FPR}}
\def\FNR{\mathrm{FNR}}
\def\norm#1{\mathopen\| #1 \mathclose\|}
\newcommand{\poly}{\mathop{\mbox{\text{poly}}}}
\newcommand{\ind}{\mathds{1}}
\newtheorem{observation}[theorem]{Observation}
\definecolor{orange}{rgb}{1,0.5,0}
\definecolor{navy}{rgb}{0,0,0.5}
\begin{document}

\maketitle

\begin{abstract}
Most systems and learning algorithms optimize average performance or average loss -- one reason being computational complexity. However, many objectives of practical interest are more complex than simply average loss. This arises, for example, when balancing performance or loss with fairness across people. We prove that, from a computational perspective, optimizing arbitrary objectives that take into account performance over a small number of groups is not significantly harder to optimize than average performance. Our main result is a polynomial-time reduction that uses a linear optimizer to optimize an arbitrary (Lipschitz continuous) function of performance over a (constant) number of possibly-overlapping groups. This includes fairness objectives over small numbers of groups, and we further point out that other existing notions of fairness such as individual fairness can be cast as convex optimization and hence more standard convex techniques can be used. Beyond learning, our approach applies to multi-objective optimization, more generally.
\end{abstract}

\begin{keywords}
optimization, fairness, learning, online algorithms, projection
\end{keywords}

\section{Introduction}
The objectives of today's algorithms may be complex and non-linear. For example, a 2016 Bloomberg report demonstrated systematic bias in the roll-out of a major online retailer's delivery service~\citep{Ingold16}. It argued that, in many major metropolitan areas like Chicago, Boston, and New York, the service area excluded predominantly black zip codes. We give reductions that might help such decision-makers account for complex factors such as fairness, from choosing facility locations, to advertising, to product selection and pricing. We show how existing systems  that optimize ``average performance'' can be used to optimize more complex objective functions. Our work builds on a line of linear reductions in algorithms and fair machine learning \citep[e.g.,][]{revisitingFrankWolfe, KalaiV05,NarasimhanRS015,Agarwal17,dwork18a}.

Consider the following optimization problem:
\begin{equation}\label{eq:duh}
\min_{c \in \calC} f(\ell_1(c), \ldots, \ell_K(c))\end{equation}
Here $\calC$ is a (possibly constrained) set of choices and \textit{loss vector} $\ell(c)\in [0,1]^K$ represents costs or losses to $K$ possibly-overlapping groups, such as racial and gender groups. As a simple optimization example, consider the classic Facility Location Problem (FLOP), in which an algorithm is given a graph $G$ and customer weights, and must select $m$ locations for facilities amongst the nodes on a directed graph, so as to minimize the weighted average distance of nodes to the closest facility. FLOPs model a firm choosing a limited number of locations to serve a set of customers, where $c$ could encode the $m$ chosen facilities and $\ell_k(c)$ could encode the average distance over customers in that group. The customer weights can also be used to reweight groups, hence a standard FLOP algorithm can minimize $w \cdot \ell(c)$ for any given group weight vector $w$. But natural objectives $f$ may be complex and non-convex, even without factoring in fairness. For instance, in a FLOP the marginal advantage of reducing group $k$'s average delivery time $l_k=\ell_k(c)$ from 2 hours to 1 hour may be more significant than reducing it from 48 hours to 47 hours, as would be captured by a concave function such as $f(l) = \sum_k \sqrt{l_k}$. Convexity is often required for algorithmic convenience, but may not be natural in any given application. 

As an ML example, consider a loan approval decision where the choices are approval criteria, i.e., classifiers $c(x)$. While it may be illegal for the approval decisions to depend on sex, the choice of $c$ may take into account resulting demographic differences. 
For simplicity, say there are two sexes and four groups based on the individual's sex (M or F) and whether $y=1$ indicating they would repay or $y=0$ indicating they would default. Let the loss vector $\ell(c)\in [0,1]^4$ consist of the misclassification rates on the four groups, which are the F/M false-positive/negative rates. Each classifier corresponds to a feasible point in $[0,1]^4$. For the moment, consider only classifiers with no false positives (no approvals that default), so the feasible set $F=\{ \ell(c)|c \in \calC\}$ can be visualized in the two dimensions of F/M false negative rates. The feasible set is the set of achievable F- and M-false-negative rates achievable by classifiers, e.g., each neural network corresponds to a feasible point. Since one may randomize amongst classifiers, one can further assume that the feasible set is convex.

Unfortunately the feasible set $F$ is not directly accessible, e.g., we do not know if there is a neural network that achieves specified F/M false negative rates (or specified average group distances in FLOP). \cite{NarasimhanRS015} showed how to use binary classification algorithms to optimize in any linear direction and consequently closely approximate $F$. In particular, they showed how to use a Frank-Wolfe style reduction to minimize convex objectives as well as ratio-of-linear objectives. \cite{Agarwal17} provide a related reduction for fair ML with arbitrarily many groups and convex constraints. In this work, we consider the case of small numbers of groups, $K$, where we show we can optimize arbitrary (Lipschitz) continuous objectives.

For example, consider the following stylized objective that combines misclassification rate with a penalty for M-F disparity amongst approvals:
\begin{equation}\label{eq:example-obj}
\min_{\stackrel{c \in \calC:}{\Pr[c(x)=1\wedge y=0] =0}} \Pr[c(x)\neq y] + \left(\Pr\left[x_{\text{sex}}=\text{F} ~\bigl|~c(x)=1\right]-\Pr\left[x_{\text{sex}}=\text{M} ~\bigl|~c(x)=1\right]\right)^2
\end{equation}
This objective (and the false positive restriction) can be computed in terms of the loss vector $\ell(c)$. However, as illustrated in Figure \ref{fig:example} (false positive rates not shown because they are all 0), this objective is not convex and can have local minima that are not global minima. 

Fairness is complicated by many factors such as biased training labels and societal biases that cannot be corrected by loan approval alone. In Figure \ref{fig:example}, exactly equal F/M rates are only achieved by rejecting all loans, which may not be very useful to a loan-granting organization. Legal definitions of discrimination discuss process and other factors beyond approval rates \citep{Barocas2016}. Nonetheless, given that algorithms are making such decisions, the flexibility to use existing algorithms to optimize complex non-linear objectives enables domain experts to choose objectives that may lead to desirable outcomes, accounting for biased data, fairness, and of course performance. Our reduction helps because it can reuse existing linear systems and can circumvent local optima that might trap more direct methods.

\begin{figure}
\centering
\includegraphics[width=5in]{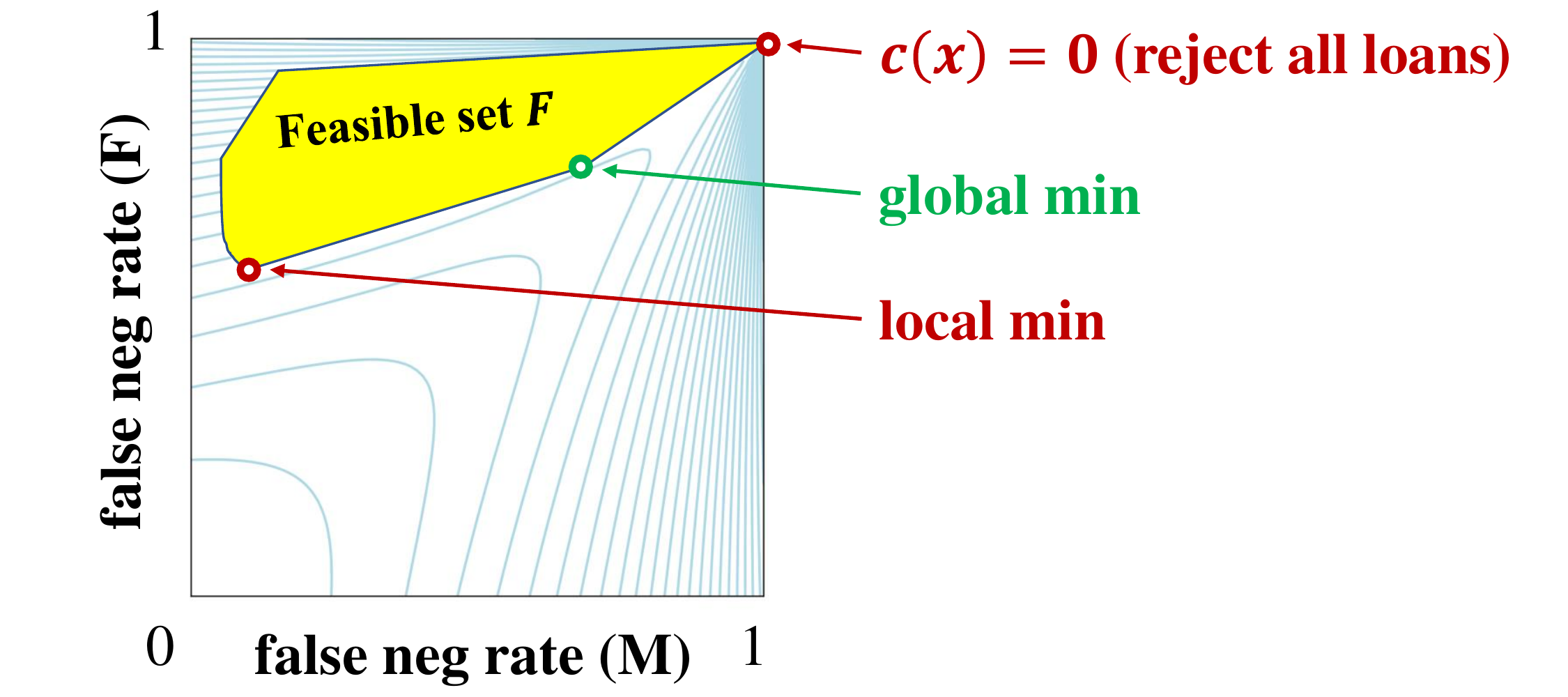}
\caption{A contour plot of non-convex objective $f$ from eq.~(\ref{eq:example-obj}) as a function of M/F false negative rates (with 0 false positives, and uniform 1/4 probability of $\pm$M/$\pm$F). The set of false positive/negative rates achieved by classifiers (yellow) has a local (non-global) minimum. 
\label{fig:example}}
\end{figure}

Algorithm \ref{alg:groupopt} gives a (polynomial-time) reduction that can minimize (Lipschitz) continuous functions $f$ (for constant $K$), using an underlying algorithm that can minimize linear loss $\sum w_k \ell_k(c)$, for arbitrary real-valued weights $w_k$. As mentioned, customer weights standard in FLOP algorithms (meant to indicate for instance the frequency of serving these customers) can be used to weight groups, and hence repeated calls to the FLOP algorithm on the same graph with different weights are used to efficiently minimize $f$. For binary classification, a corollary of our main theorem is the following:

\noindent
\textbf{Corollary \ref{cor:classification} (Informal version)} \textit{
Let $M$ be an agnostic learner for family $\calC$ of binary classifiers $c:\calX \rightarrow\{0,1\}$. Let $\calX_1, \ldots, \calX_K\subseteq \calX$ be sensitive groups. Then any Lipschitz continuous function of the error rates and false positive and negative rates across the groups with Lipschitz-continuous constraints can be minimized over $\calC$ to within $\eps$ in time and calls to $M$ that are $\poly(1/\eps$) for any constant $K$.}

\noindent
Agnostic learning \citep{Kearns:1992} is a natural model of computationally efficient binary classification. The work of \cite{NarasimhanRS015} and \cite{Agarwal17} shows how to optimize {\em convex} objectives (and ratio-of-linear objectives) with convex constraints on group error rates using an existing agnostic learner. The above corollary extends this to arbitrary continuous objectives, though it does require constant numbers of groups $K$. We also show (Corollary \ref{cor:regression}) how to use an agnostic learner to optimize a continuous and nondecreasing function of squared errors across groups -- hence a classifier can be used for a form of fair regression. 

NP-hard problem such as FLOP and many agnostic learning problems are intractable for some large instances, but algorithms may run quickly on ``real-world'' instances. For instance, as long as the FLOP algorithm runs quickly on a graph $G$, eq.~(\ref{eq:duh}) can be efficiently solved on $G$. Similarly, while neural network training algorithms are not guaranteed to find optimal solutions, in practice they often perform well. 


\smallskip \noindent
\textbf{Constraints}. Note that general (continuous) constraint functions $g(l)\leq 0$ can easily be folded into the objective --- if $R=\max_{l,l' \in F} f(l)- f(l')$ is the range of $f$ and $\mathrm{opt}$ is the minimum value of $f(l)$ over feasible $l=\ell(c)$ such that $g(l)\leq 0$ (assuming the constraint is satisfiable), then 
\begin{equation}\label{eq:constraints}
f(l) + \Gamma \max(0, g(l)) \leq \mathrm{opt} + \eps ~~\text{ implies }~~ f(l) \leq \mathrm{opt} + \eps \text{ and } g(l) \leq (\eps+R)/\Gamma.
\end{equation}
In our example (\ref{eq:example-obj}), constraining the false positive rate to $\leq \delta$ (it cannot be constrained to exactly 0 with finite samples) can be achieved by adding $\Gamma \Pr[c(x)=1\wedge y=0]$ to the objective, for $\Gamma \geq (R+\eps)/\delta$. A set of linear constraints $u_i \cdot l \leq b_i$ can be captured by $g(l)=\max_i u_i \cdot l - b_i$.

\smallskip \noindent
\textbf{Convex objectives.} If $f$ happens to be a convex function, then Section \ref{sec:convex} points out that approaches like that of \cite{NarasimhanRS015} can be used to convert a weighted-sum minimizer of $\sum w_i \ell_i(c)$ to efficiently optimize (\ref{eq:duh}) to within $\eps$ in time $\poly(1/\eps, K)$, which means that we can handle large numbers of groups or even $K=n$, i.e., a group for each customer. For instance, a FLOP algorithm could be used to find a (distribution over) $m$ facilities that minimize the maximum (expected) customer's distance to its nearest facility, or the Generalized Gini Index considered in fairness \citep{weymark1981, busa2017multi} and which has been studied in FLOP in particular \citep{drezner2009equitable}. It can also find classifier parameters for individual fairness (also called metric fairness) \citep{Dwork:2012}.

\smallskip \noindent
\textbf{Organization.} We next present related work followed by mathematical preliminaries. Section \ref{sec:group} presents our general reduction for continuous functions of losses across $K$ groups. Section \ref{sec:learning} shows how to apply this to learning. And Section \ref{sec:convex} shows how to reduce convex objectives to linear optimization.

\section{Related Work}

\textbf{Reductions in binary classification.}
\cite{NarasimhanRS015} show reductions to binary classification albeit for
performance measures that can be expressed as a ratio of linear functions or that
are concave. \cite{dwork18a} show how to reduce the problem of optimizing a group-fair loss (for disjoint groups) to standard binary classification, however their work requires that classifiers $c$ have access to the sensitive attribute determining group membership. Their approach is simply to learn a different classifier for each group, whereas in our work the classifiers may or may not (or may have partial) access to sensitive attributes. Their work further requires $f$ to be non-decreasing but, on the other hand, does not require $f$ to be continuous.
\cite{Agarwal17} show how to reduce convex fairness constraints, such as demographic parity and equalized odds, to binary classification.
 Within classification, our work builds upon and generalizes these works in that we can handle arbitrary continuous functions of
losses (for a constant number of possibly overlapping groups).

\medskip
\noindent
\textbf{Multi-objective optimization}.
When faced with multiple objectives, no one single solution may be optimal \citep[see, e.g.,][]{ZuluagaSKP13, RaviMRRH93, GrandoniRSZ14}. A solution is Pareto-optimal if no other solution is simultaneously better in all objectives. Often the goal is to recover the entire set of Pareto-optimal solutions. 
While computing the Pareto set (even membership) is NP-hard~\citep{RaviMRRH93}, 
for some problems it can be succinctly approximated \citep{warburton1987approximation, papadimitriou2000approximability, Diakonikolas:2011}. Some work in multi-objective optimization addresses the problem as a convex programming problem, while other work such as \cite{Diakonikolas:2011} provides reductions to an underlying linear optimization problem. Our work follows in this spirit.

\medskip
\noindent
\textbf{Optimizing non-decomposable ML losses}.
Misclassification rate is the most common performance measure studied in
learning theory.  However in cases of severe label imbalance, other evaluation metrics over the entire dataset may be used, but these metrics are often non-decomposable. For example, \cite{ParambathUG14} show how to reduce the problem of minimizing F-measures such as the F1 score, the harmonic mean of precision and recall, to linear optimization. Our work generalizes this work.

\medskip
\noindent
\textbf{Fairness in classification}.
The study of fairness in classification is an important emerging research area
~(\cite{Dwork:2012, JosephKMR16, Kearns:17}). Some impossibility results are
known. For instance, except in cases of equal base rates of true
classifications across groups or in very rare cases of perfect predictions,
there is no well-calibrated and class-balanced classifier
~(\cite{KleinbergMR17,Chouldechova17}). Despite such results,
we know that we can design multi-objective metrics that trade-off fairness for accuracy and
vice-versa~(\cite{Woodworth:2017, MenonW17, dwork18a, Goh16}). We show that optimizing fair multi-objectives
(for a constant number of possibly overlapping groups) is no harder than
optimizing approximate linear optimizers.
\cite{OgryczakLPNT14}
survey some models and methods for achieving
fairness, with a particular emphasis on the optimization
of networks. 

\medskip
\noindent
\textbf{Frank-Wolfe and online optimization}.
The general approach of reducing optimization to linear optimization has been utilized in many areas of machine learning \citep{revisitingFrankWolfe,KalaiMV08}. For the convex optimization, we use the reduction of \cite{KakadeKL09} as a sub-routine. We also leverage the fact that their reduction can handle minimizers with limited domains.

\section{Mathematical Notation}
Let $\Delta(S)$ denote the set of probability distributions over $S$ and let $T^S$ denote the set of functions $f: S \rightarrow T$, for sets $S,T$. For $U \subseteq S$, write $f(U) = \{f(u)~|~u \in U\}$. For $v, w\in \reals^K$, we write $v\leq w$ if $v_i\leq w_i$ for all $i$, and for scalar $r \in \reals$, the vector-scalar sum is $v+r=(v_1+r, \ldots, v_K+r)\in \reals^K$. A function $f: \reals^K \rightarrow \reals$ is said to be nondecreasing if it is nondecreasing in each coordinate, or equivalently $f(v) \leq f(w)$ for all $v\leq w$. We say that function $f:\reals^n \rightarrow \reals^d$ is convex if it is convex in each coordinate, i.e., if $f(\lambda v + (1-\lambda)w) \leq \lambda f(v) + (1-\lambda) f(w)$ for $\lambda \in [0,1]$.

For $v \in \reals^d$, let $(v)_+ = \bigl(\max(0, v_1), \ldots, \max(0, v_d)\bigr) \in \reals^d$ and  $\|v\|_\infty = \max_i v_i$. Let $\reals_+=[0,\infty)$. For set $S$, let $S^*=S \cup S \times S \cup \ldots$ be arbitrary-length sequences of elements of $S$.

Also, for $L \geq 0$,  $f: \reals^d \rightarrow \reals$ is said to be $L$-Lipschitz continuous (or just $L$-Lipschitz) if $|f(x)-f(y)| \leq L\|x-y\|$ for all $x,y \in \reals^d$. We say $f$ is Lipschitz if it is 1-Lipschitz. Finally, let $[n]=\{1,2,\ldots,n\}$ and $\ind[P]$ denote 1 if predicate $P$ holds, otherwise 0.

\section{Group Objectives}\label{sec:group}
Let $K$ be a constant number of groups, let $\calC$ be a set of choices, and let $\ell: \calC\rightarrow [0,1]^K$ be a loss function that determines the (bounded) average loss on each choice in $c$. We assume that $\calC$ is {\em closed under randomization} meaning that every $c_1, \ldots, c_n\in \calC$ can be identified with a choice $c \in \calC$, written $c=\mathrm{UniformDist}(\{c_1,\ldots,c_n\})$, with $\ell(c)=(\ell(c_1)+\cdots+\ell(c_n))/n$. This assumption is necessary for fairness, e.g., clearly randomization is necessary for fairness in locating a single facility on a two-node graph. Even for nonlinear $c$'s such as neural networks, it is possible to randomize between neural networks or average their outputs (which is very different than averaging the weights).  

We also fix a comparison set $\bar{\calC} \subseteq \calC$, which may be equal $\bar{\calC}=\calC$ but more generally allows an algorithm to output a choice in a different set $\calC$ but only be compared to choices from $\bar{\calC}$. This technicality allows us to capture so-called ``improper'' learning algorithms that learn a class $\bar{\calC}$ by outputting classifiers in a different family, e.g., $\bar{\calC}$ may be a class of bounded-size binary decision trees and $\calC$ may include the larger class of low-degree polynomials.  We also assume that $\bar\calC$ is closed under randomization, so our guarantees compared to the best $\bar{c}\in \bar\calC$ also compare to probability distributions over choices.

For $w \in \reals^K$ and $c \in \calC$, we call $w \cdot \ell(c)$ the weighted loss of $c$. We require that we can efficiently (approximately) minimize $w \cdot \ell(c)$. In some applications, this can only be done for non-negative $w \geq 0$, and for those we will have an additional monotonicity requirement on $f$. To this end, consider two types of linear optimizers: (a) a general linear optimizer $M_\tau:\reals^K\rightarrow \calC$ which can take arbitrary real weights; and (b) a nonnegative linear optimizer $M_\tau:\reals_+^K\rightarrow \calC$ , with its conditions in parentheses, which takes only nonnegative weights.

\begin{definition}[(Nonnegative) Linear optimizer]
\label{def:LOO}
A {\em (nonnegative) linear optimizer} $M_\tau$, for any tolerance $\tau>0$, for any $w \in \reals^K$ (with $w \geq 0$), outputs $M_\tau(w)\in \calC$ which, to within $\tau$, optimizes weighted loss, i.e.,
$$w \cdot \ell(M_\tau(w)) \leq \min_{\bar{c} \in \bar{\calC}} w \cdot \ell(\bar{c}) + \tau \|w\|$$
\end{definition}
The tolerance $\tau$ is scaled by $\|w\|$ due to the fact that a scaling of $w$ scales all $w \cdot \ell(c)$ by the same factor. Note that linear optimization does not need randomization: $\mathrm{UniformDist}(\{c_1,\ldots,c_n\})$ can always be replaced by the choice $c_i$ that minimizes $w \cdot \ell(c_i)$. For instance, if the family $\calC$ consists of probability distributions over neural networks, then one could simply choose the most accurate neural network without degrading performance. For FLOP, the linear minimizer will output a fixed set of locations. Our algorithm, however, will randomize over small number of neural networks or a small number of $t$-tuples of locations.  

Of course, we also must be able to compute the loss for each group, at least approximately:
\begin{definition}[Loss assessor]
\label{def:GLO}
A loss assessor $\ell_\tau$, for any tolerance $\tau>0$, computes a function $\ell_\tau: \calC \rightarrow [0,1]^K$ which approximates $\ell$ to within $\tau$, i.e., $\|\ell_\tau(c)-\ell(c)\|\leq \tau$ for all $c \in \calC$.
\end{definition}

Algorithm \ref{alg:groupopt} uses a (nonnegative) linear optimizer and a loss assessor
to minimize a (nondecreasing) Lipschitz continuous loss $f: [0,1]^K\rightarrow \reals$.  It works by creating a grid $G$ over $[0,1]^K$, fine enough that the optimum is within $O(\eps)$ of some grid point. (Note that the optimum's nearest grid point is nearly feasible.) We sort the nearly feasible grid points by $f(\cdot)$. For each such grid point $r$, in order from least to greatest $f(r)$, we see if we can find a $c\in \calC$ with $\ell(c)$ within $O(\eps)$ of $r$ by calling the minimizer $M(\ell(c)-r)$ and moving in that direction $T$ times.  Otherwise we move on to the next grid point. The algorithm is shown to stop on or before the grid point closest to the optimal feasible solution, and outputs a uniform distribution $c$ over a set of $T=O(1/\eps^2 \log 1/\eps^2)$ choice settings.

Several notes about the algorithm are in order. 
\begin{itemize}
\item While the algorithm outputs a convex combination of $T$ choices, this can always be reduced to a combination of at most $K+1$ choices (since we are in $[0,1]^K$) using standard Carathéodory's Theorem \citep[see, e.g.,][]{mulzer2014computational}.

\item Equation (\ref{eq:constraints}) shows how to fold constraints $g(x) \leq 0$ for (nondecreasing) Lipschitz $g:\reals^K \rightarrow \reals$ into the objective. 

\item While the algorithm requires Lipschitz (i.e, $L=1$) functions, it can be used for any $L$-Lipschitz $f$ for $L>1$ by simply applying the algorithm to Lipschitz function $f/L$ with accuracy $\eps/L$. The Lipschitz constant may depend on group demographics and on $\Gamma$ (\ref{eq:constraints}).
Further, Laplace smoothing \citep[see, e.g.][]{schutze2008introduction} can be used to make Lipschitz-continuous approximations to discontinuous objectives. For instance, the objective of eq.~\ref{eq:example-obj} is discontinuous at (1,1), but is continuous if one adds a constant to the counts of false positives/negatives in each group.

\end{itemize}

The theorem below provides guarantees for \textsc{GroupOpt} (with guarantees for nonnegative linear optimizer $M_\tau:\reals_+^K \rightarrow \calC$ written in parentheses). 

\begin{algorithm}
\caption{\textsc{GroupOpt}: Minimizing group-loss $f$ using linear optimizer}
\label{alg:groupopt}
\KwIn{accuracy $\eps>0$, $f,g:[0,1]^K\rightarrow\reals$, loss assessor $\ell_\tau$, (nonnegative) linear optimizer $M_\tau$}

\KwOut{$c \in \calC$}

\smallskip
Let $\beta = \frac{\eps}{5}$, $q = \frac{\beta}{\sqrt{K}}$, $\tau=\frac{\beta^2}{\sqrt{K}}$, $T = \frac{K}{\beta^2}\ln\frac{K}{\beta^2}$.

\smallskip
Create grid $G = \{0, q, 2q, 3q, \ldots, \lfloor 1/q\rfloor q\}^K\subseteq [0,1]^K$.

\smallskip
Check if $f$ is nondecreasing coordinatewise on $G$. If so, let $N=1$ else $N=0$.

\smallskip
Sort points in $G$ by $f(r)$ in increasing order

\smallskip
\For {$r$ in $G$}
{
   $c_1 = M_\tau(0)$ ~~~~// any initial choice

  \For {$t=1$ to $T$} {

      $\hat{l}_t = \max\left(\frac{1}{t}(\ell_\tau(c_1)+\cdots+\ell_\tau(c_t)), Nr\right)$

      $c_{t+1} = M_\tau(\hat{l}_t-r)$    
   }   
   \If{ $\left\|\hat{l}_{T}-r\right\|\leq 3\beta$} {
   \Return $c=\mathrm{UniformDist}(\{c_1, c_2, \ldots, c_T\})$  ~~~~// uniform probability distribution
   
 }
}
\end{algorithm}

\begin{theorem}\label{thm:group}
Let $K\geq 1$, $f: [0,1]^K \rightarrow \reals$ be (nondecreasing) Lipschitz-continuous functions, $\ell: \calC\rightarrow [0,1]^K$ be a group loss function with loss assessor $\ell_\tau$ and (nonnegative) linear optimizer $M_\tau$. 
For any $0<\eps\leq 1$,  $ f(\ell(\textsc{GroupOpt}(\eps, f, \ell_\tau, M_\tau)) \leq \min_{l \in \ell(\bar\calC)}f(l)+\eps$. For constant $K$, \textsc{GroupOpt} makes $\poly(1/\eps)$ queries to $\ell_\tau$ and $M_\tau$ both with $\tau=\frac{\eps^2}{25\sqrt{K}}$, and uses $\poly(1/\eps)$ additional runtime and calls to $f$.
\end{theorem}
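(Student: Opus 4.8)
The plan is to read the inner loop of \textsc{GroupOpt}, for a fixed target grid point $r$, as an inexact Frank--Wolfe / approximate-projection routine (in the spirit of \cite{KakadeKL09}) that drives $\ell\big(\mathrm{UniformDist}(c_1,\dots,c_t)\big)=\tfrac1t\sum_{i\le t}\ell(c_i)=:\bar l_t$ toward a minimizer of the $1$-smooth convex potential $\psi_r$ over $F:=\overline{\mathrm{conv}}\,\ell(\bar\calC)$, where $\psi_r(l)=\tfrac12\|l-r\|^2$ when $N=0$ and $\psi_r(l)=\tfrac12\|(l-r)_+\|^2$ when $N=1$. Indeed, by closure under randomization $\bar l_t$ is exactly the $t$-th Frank--Wolfe iterate under step size $\gamma_t=1/(t{+}1)$, and the call $c_{t+1}=M_\tau(\hat l_t-r)$ supplies a linear-minimization direction since $\hat l_t-r$ agrees with $\nabla\psi_r(\bar l_t)$ up to the $\le\tau$ loss-assessor error (the $\max(\cdot,Nr)$ clip is what makes this true in the $N=1$ case, and simultaneously keeps the weight nonnegative so that a nonnegative optimizer suffices). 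Two points of care: because $M_\tau$ only promises $w\cdot\ell(M_\tau(w))\le\min_{\bar c\in\bar\calC}w\cdot\ell(\bar c)+\tau\|w\|$, the ``oracle'' may return a point of $\ell(\calC)\setminus F$; but since $\ell$ has range $[0,1]^K$ the iterates and oracle outputs all lie in $[0,1]^K$ (diameter $\le\sqrt K$), so the curvature constant is still $\le K$ and the analysis is unaffected --- this is also precisely what lets the output lie in $\calC$ while competing against $\bar\calC$.

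Next, I would fix a good grid point: let $l^\star\in\ell(\bar\calC)$ (nearly) attain $\mathrm{opt}:=\min_{l\in\ell(\bar\calC)}f(l)$ and round it to a grid point $r^\star$, to the nearest one when $N=0$ and coordinatewise upward when $N=1$; then $\|l^\star-r^\star\|\le q\sqrt K=\beta$, so $f(r^\star)\le\mathrm{opt}+\beta$ by $1$-Lipschitzness and $l^\star$ witnesses $\min_{l\in F}\psi_{r^\star}(l)=O(\beta^2)$ (in fact $0$ when $N=1$, since then $l^\star\le r^\star$). The technical core is the inexact Frank--Wolfe rate: for \emph{every} grid point $r$, after $T$ iterations $\psi_r(\bar l_T)\le\min_{l\in F}\psi_r(l)+O(\beta^2)$. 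This is the textbook Frank--Wolfe analysis for a $1$-smooth convex objective over a set of $\ell_2$-diameter $\le\sqrt K$ --- the curvature term with step sizes $\gamma_t=1/(t{+}1)$ is $O(K\ln T/T)$, and the $\ln$ here is exactly why $T$ in the algorithm carries a $\ln$ factor --- strengthened to tolerate the per-step oracle slack $\tau\|\hat l_t-r\|\le\tau\sqrt K$ and the gradient perturbation $\le\tau$ from the loss assessor, each of which adds an $O(\tau\sqrt K)$ term. The settings $\beta=\eps/5$, $q=\beta/\sqrt K$, $\tau=\beta^2/\sqrt K$, $T=(K/\beta^2)\ln(K/\beta^2)$ make all three contributions $O(\beta^2)$.

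Given this, the rest is bookkeeping. \emph{Soundness}: if the loop returns $c=\mathrm{UniformDist}(c_1,\dots,c_T)$ at a grid point $r$, then $\ell(c)=\bar l_T$ and the return test $\|\hat l_T-r\|\le3\beta$ with $1$-Lipschitzness gives $f(\hat l_T)\le f(r)+3\beta$; when $N=0$, $\|\ell(c)-\hat l_T\|\le\tau$ closes it, and when $N=1$, $\ell(c)=\bar l_T\le\hat l_T+\tau\mathbf{1}$ coordinatewise plus monotonicity gives $f(\ell(c))\le f(\hat l_T)+\tau\sqrt K$ (this is the single place monotonicity is used, to avoid a spurious $\sqrt K$ that a coordinatewise Lipschitz bound would incur); either way $f(\ell(c))\le f(r)+3\beta+O(\beta^2)$. \emph{Completeness}: by the second paragraph $\psi_{r^\star}(\bar l_T)=O(\beta^2)$, so $\|(\bar l_T-r^\star)_+\|=O(\beta)$ (and $\|\bar l_T-r^\star\|=O(\beta)$ when $N=0$), and since $\hat l_T$ differs from $\max(\bar l_T,Nr^\star)$ by at most the assessor error, $\|\hat l_T-r^\star\|\le3\beta$ --- so the loop returns no later than $r^\star$. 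Because grid points are scanned in increasing order of $f$, the returned $r$ has $f(r)\le f(r^\star)\le\mathrm{opt}+\beta$, whence $f(\ell(c))\le\mathrm{opt}+4\beta+O(\beta^2)\le\mathrm{opt}+5\beta=\mathrm{opt}+\eps$ for $\eps\le1$. For complexity, $|G|=(\lfloor1/q\rfloor{+}1)^K=O((\sqrt K/\eps)^K)$, each grid point is processed with $T=\poly(1/\eps)$ iterations, each making one query to $M_\tau$ and one to $\ell_\tau$ at tolerance $\tau=\eps^2/(25\sqrt K)$, plus $O(K\,|G|\log|G|)$ arithmetic operations and calls to $f$ for the monotonicity check and the sort --- all $\poly(1/\eps)$ for constant $K$ --- and Carath\'eodory's theorem reduces the returned distribution to one over $K{+}1$ choices.

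I expect the main obstacle to be making the inexact Frank--Wolfe bound quantitatively sharp enough: the averaging step size $1/(t{+}1)$ has to be tracked carefully (it produces the $\ln$, and sloppier handling gives either only $O(K/T)$ without the log or worse constants), and the two $O(\tau\sqrt K)$ perturbation terms must be controlled so that the constant hidden in the $O(\cdot)$ genuinely closes against the $3\beta$ return threshold, i.e.\ $\|\hat l_T-r^\star\|\le3\beta$ truly holds under the stated parameter choices. The $N=0$ versus $N=1$ split --- rounding direction, the clipped potential, and deploying monotonicity at exactly the point where the plain Lipschitz estimate would lose a $\sqrt K$ --- is the other thing to keep straight, though it is routine once the clipped potential is identified.
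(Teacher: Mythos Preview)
Your plan is essentially the paper's own proof. The paper's Lemma~\ref{lem:finally} is exactly your inexact Frank--Wolfe rate for the inner loop, proved by direct telescoping on $(l_t-r)^2$ where $l_t=\max(\bar l_t,Nr)$ (which equals your $2\psi_r(\bar l_t)$ when $N=1$); the outer argument---sort grid points, show the loop returns no later than a grid point $\bar r$ near the optimum, then bookkeep with Lipschitzness---is identical. Two cosmetic differences: the paper rounds the optimum to the \emph{nearest} grid point (distance $\le\beta/2$) rather than upward, and its lemma bounds against $\min_{\bar c}\|\ell(\bar c)-r\|^2$ rather than $\min_F\psi_r$; both choices make the constants land inside the $3\beta$ threshold with $\beta^2/4+4\beta^2+2\beta^2<7\beta^2$ and $\sqrt 7\beta+\tau\le3\beta$.

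There is one genuine gap in your soundness step. You invoke monotonicity of $f$ whenever $N=1$, but $N$ is set by the \emph{algorithm} from a grid check, not by the theorem hypothesis: in the non-parenthetical case (arbitrary Lipschitz $f$, general $M_\tau$) the grid check may accidentally set $N=1$ even though $f$ is decreasing somewhere off the grid, and then ``$\ell(c)\le\hat l_T+\tau\mathbf 1$ plus monotonicity'' is unjustified. The paper patches this in one sentence: since $\ell(c)\le l_T$ coordinatewise and $\|l_T-r\|\le3\beta+\tau$, there is a grid point $r'\le r$ within $\beta/2$ of $\ell(c)$ (shift each coordinate of $r$ down as needed); grid-monotonicity gives $f(r')\le f(r)$, and Lipschitzness from $r'$ to $\ell(c)$ closes the bound with an extra $\beta/2$, which still fits since $4\beta+\tau<\eps$. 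Add this case and your argument is complete.
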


Simply put, the computational complexity of optimizing continuous $f$ is the same, up to polynomial factors, as linear optimization, for a constant number of groups $K$. As we shall see, since $\ell$ corresponds to the loss on a specific instance, this theorem implies that for those graphs or distributions on unlabeled data $\calX$ for which (weighted) linear optimization is possible, group optimization is possible as well. The proof of Theorem \ref{thm:group}, which contains runtime bounds, is deferred to Appendix \ref{ap:proof:thm:group}. 

\section{Group Learning Applications}\label{sec:learning}
In this section, we show how Theorem \ref{thm:group} can be applied to binary classification and regression. In Section \ref{sec:regression-reduction}, we show  that regression can be reduced to classification. In Section \ref{sec:learning-meat}, we show that one can use a binary classifier to minimize arbitrary continuous group-loss objectives for classification and regression. 

We begin by defining a standard notion of computationally efficient learning called {\em Agnostic learning} \citep{Kearns:1992}. It is called agnostic because, unlike PAC learning \citep{valiant1984theory} which it generalizes, it makes no assumption on the distribution of data. 

\subsection{Agnostic Learning Definitions}\label{sec:learning-prelim}

Let $\calX$ be a set of feature vectors, $\calZ \subseteq \calX \times \{0,1\}$ be the set of examples, and $\mu\in \Delta(\calZ)$ be a distribution over $\calZ$. A {\em classifier} is a possibly-randomized function $c: \calX \rightarrow \{0,1\}$. Let $\calC\subseteq \{0,1\}^\calX$ be a set of \textit{legal} classifiers or predictors which can, for instance, enable restrictions on using protected attributes. Let $\bar{\calC} \subseteq\calC$ be a family of classifiers to be learned. Our framework is general and not restricted to the so-called \textit{proper} case when $\bar{\calC}=\calC$.  The error of a classifier $c$ is $$\err_\mu(c)=\Pr_{x,y \sim \mu}\left[y\neq c(x)\right].$$ 

To fit in our framework, we assume the classes $\calC$ and $\bar\calC$ are closed under randomization. For instance, if $\calC$ included rectangles in $\reals^d$, it would also include probability distributions over rectangles. However, optimization with respect to this larger class is no harder because the most accurate single classifier (e.g., rectangle) is always as accurate as the most accurate distribution over rectangles. Hence, our requirement of agnostic learning is weak -- one only needs to find a pure predictor, while our guarantees will be with respect to all distributions in $\bar\calC$, e.g., mixtures over rectangles, which is necessary for fairness.

\begin{definition}[Agnostic learning]
Polynomial-time algorithm $M:\calZ^*\rightarrow \calC$ is an agnostic learner for ${\bar{\calC}} \subseteq \calC \subseteq \{0,1\}^\calX$ ($[0,1]^\calX$) if there exists a polynomial $P(1/\eps, 1/\delta)$, such that for any $\eps, \delta>0, \mu \in \Delta(\calZ)$, with probability $\geq 1-\delta$ over $n\geq P(1/\eps, 1/\delta)$ examples $Z = \langle(x_1, y_1), \ldots (x_n, y_n)\rangle$ iid from $\mu$, $M$'s output has error within $\eps$ of optimal, i.e.,
$$\Pr_{Z \sim \mu^n}\left[\err_\mu({M(Z)}) \leq \min_{\bar{c} \in \bar{\calC}} \err_\mu({\bar{c}}) + \eps\right] \geq 1-\delta.$$
\end{definition}
Note that an agnostic learning algorithm is often defined as taking $\eps, \delta$ as input choices as well \citep{Kearns:1992}, however, one can simply think of the algorithm as taking an arbitrary number of examples $n$ as input and providing such a guarantee for a fixed schedule of $\eps, \delta$ that are inverse polynomial functions of $n$ (e.g. $\eps=\delta=n^{-1/4}$).

Distribution-specific agnostic learning refers to the case in which the above guarantee holds with respect to some specific distribution marginal on $x$. Common marginal distributions for which algorithms are analyzed include the uniform distribution on the hypercube $x\in\{0,1\}^d$ or any Gaussian or log-concave distribution.

\subsection{Group Agnostic Learning}\label{sec:learning-meat}

We now consider agnostic learning for $K$ groups $\calX_1, \ldots, \calX_K \subseteq \calX$, each with a group oracle, also denoted $\calX_k$, that efficiently computes group membership. Note that classifiers do not directly have access to group membership -- it is assumed to be available on training data but not for prediction on future examples. Typically, group membership is as simple as checking whether a protected attribute and/or label belong to a particular set. In our model, protected attributes are encoded in $\calX$, 
but the family $\calC$ of legal classifiers may not be allowed to depend on them. 

Let us illustrate on classification. Consider the following example for classification (with regression in parentheses).
\begin{example}
$\calX = \{-1,1\}^2 \times \reals^d$, where the first attribute $x_1$ encodes race, a binary protected attribute, meaning that it cannot be used for classification, while $x_2$ encodes gender, say also binary for illustration, which can be used for classification. The legal classifiers $\calC$ are all $c: \calX\rightarrow [0,1]$ such that $c(-1,x)=c(1,x)$ for any $x\in \{-1,1\} \times \reals^d$. Gender and race are needed for training data, while only gender is needed for classification. The family $\bar\calC$ being learned could be linear thresholds $c_\theta(x) = \ind[\theta \cdot x > 0]$ (with bounded norm $\|\theta\|$), with the additional requirement on $\theta \in \reals^{2+d}$ that $\theta_1=0$. 
\end{example}
To accommodate intersectionality \citep{crenshaw1989demarginalizing}, one can create groups for each combination of group attributes. For instance, in the race/gender example, one could create 8 groups, one for each combination of race, gender, and label. More generally, given a constant number of subsets of $\calX$, there are a constant number of intersections of groups and their complements together with binary labels. 
\begin{align*}
\FPR_k(c) &= \Pr[c(x)=1~|~x \in \calX_k \wedge y=0]\\ 
\FNR_k(c) &= \Pr[c(x)=0~|~x \in \calX_k \wedge y=1] 
\end{align*}
For each $k\in [K], i\in \{0,1\}$, define $p_k^i=\Pr[x \in \calX_k \wedge y=i]$ and 
$p_k=p_k^0+p_k^1=\mu(\calX_k\times\{0, 1\})$. For simplicity we assume that the algorithm is given exact knowledge of all $p_k^i$'s, but this assumption can be removed using standard estimation by Chernoff bounds. Note that with knowledge of $p_k^i$, the vector $(\FPR(c),\FNR(c))\in[0,1]^{2K}$ has sufficient statistics to compute all other quantities of interest such as the group error rates, precision, recall, and $\FONE$-score:
\begin{align}
\ERR_k(c) &= \Pr[c(x)\neq y ~|~x \in \calX_k] = \frac{p_k^0\FPR_k(c)  + p_k^1\FNR_k(c)}{p_k} \label{eq:group-err}\\
\PRE_k(c) &= 1-\FNR_k(c)\\
\REC_k(c) &= \frac{p_k^1(1-\FNR_k(c))}{p_k^1(1-\FNR_k(c)) + p_k^0\FPR_k(c)}\\
\FONE_k(c) &= \frac{2 \PRE_k(c)\REC_k(c)}{\PRE_k(c)+\REC_k(c)} \label{eq:f1}
\end{align}

The formal version of the corollary stated in the introduction is given below.  
\begin{corollary}\label{cor:classification}
Let $M$ be an efficient agnostic classification algorithm for $\bar{\calC}$. Let $L>0$ and $f:[0,1]^{2K} \rightarrow \reals$ be $L$-Lipschitz. For any $\eps, \delta>0$, with probability $\geq 1-\delta$, Algorithm \ref{alg:groupopt} together with oracles to $\calX_k$ and knowledge of $p_k^i=\mu(\calX_k \times \{i\})$ can be used to output $c \in \calC$  such that,
$$f(\FPR(c), \FNR(c))\leq \min_{\bar{c}\in \bar{\calC}} f(\FPR(\bar{c}), \FNR(\bar{c}))+\eps.$$
Let $\pmin = \min_{k,i} p_k^i$. For any constant $K$, the algorithm uses $\poly(L, 1/\eps, 1/\delta,  1/\pmin)$ total runtime, examples, calls to $f$, group membership oracles $\calZ_k$, and agnostic learner $M$. $M$ is called on examples with the same marginal distribution on $\calX$ as $\mu$. 
\end{corollary}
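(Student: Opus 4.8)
\textbf{Proof proposal for Corollary \ref{cor:classification}.}
The plan is to invoke Theorem \ref{thm:group} with the $2K$-dimensional loss vector $\ell(c)=(\FPR_1(c),\ldots,\FPR_K(c),\FNR_1(c),\ldots,\FNR_K(c))\in[0,1]^{2K}$, so that $f(\FPR(c),\FNR(c))=f(\ell(c))$ and $\min_{\bar c\in\bar\calC}f(\FPR(\bar c),\FNR(\bar c))=\min_{l\in\ell(\bar\calC)}f(l)$. Since $\calC$ and $\bar\calC$ are closed under randomization (as assumed for agnostic learning), the only missing ingredients are a \emph{general} (signed-weight) linear optimizer $M_\tau:\reals^{2K}\to\calC$ and a loss assessor $\ell_\tau$ in the sense of Definitions \ref{def:LOO} and \ref{def:GLO}, both built from the agnostic learner $M$, the group oracles $\calX_k$, and the known $p_k^i$; we must build the general (not merely nonnegative) version because $f$ is not assumed nondecreasing. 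Given these, we run Algorithm \ref{alg:groupopt} on $f/L$ with accuracy $\eps/L$ (the $L$-Lipschitz rescaling remark after the algorithm) and $\tau=(\eps/L)^2/(25\sqrt{2K})$, and the theorem's guarantee for $f/L$ rescales to the claimed bound for $f$.

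\emph{Building the linear optimizer.} Fix $w\in\reals^{2K}$, $w=(w^{\FPR},w^{\FNR})$, and assume $w\neq 0$ (else return any legal $c$). Writing $\FPR_k(c)=\E_\mu[c(x)\,\ind[x\in\calX_k]\,\ind[y=0]]/p_k^0$ and symmetrically for $\FNR_k$, one obtains the affine identity $w\cdot\ell(c)=\beta_0+\E_{(x,y)\sim\mu}[c(x)\,g(x,y)]$, where $\beta_0$ is independent of $c$ and $g(x,y)=\ind[y=0]\sum_{k:x\in\calX_k}w^{\FPR}_k/p_k^0-\ind[y=1]\sum_{k:x\in\calX_k}w^{\FNR}_k/p_k^1$ is computable from $y$, the group oracles, and the $p_k^i$, with $|g(x,y)|\le B:=\|w\|_1/\pmin$. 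Minimizing $w\cdot\ell(c)$ is thus a cost-sensitive classification problem. To reduce it to \emph{agnostic} binary classification \emph{without changing the marginal on $\calX$} (as the corollary requires), relabel each training example $(x,y)$ with a fresh label $y'\sim\mathrm{Bern}\bigl(\frac12(1-g(x,y)/B)\bigr)$ — a valid parameter since $|g/B|\le 1$ — and let $\mu'$ be the resulting distribution on $\calX\times\{0,1\}$; it is samplable and has the same $\calX$-marginal as $\mu$. A short computation gives $\err_{\mu'}(c)=\E_\mu[\frac12(1-g(x,y)/B)]+B^{-1}\E_\mu[c(x)g(x,y)]$, i.e., $\err_{\mu'}$ is an increasing affine function of $w\cdot\ell$ with slope $1/B$. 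Hence running $M$ on samples from $\mu'$ to accuracy $\eps'$ returns $c$ with $w\cdot\ell(c)\le\min_{\bar c\in\bar\calC}w\cdot\ell(\bar c)+B\eps'$; taking $\eps'=\tau\pmin/\sqrt{2K}$ and using $B\le\sqrt{2K}\,\|w\|/\pmin$ yields $B\eps'\le\tau\|w\|$, exactly as Definition \ref{def:LOO} demands, and $\eps'$ does \emph{not} depend on $\|w\|$.

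\emph{Building the loss assessor, and wrapping up.} The assessor is routine Chernoff estimation: draw fresh examples from $\mu$, compute group memberships via the $\calX_k$ oracles, and estimate each of the $2K$ conditional rates by its empirical conditional frequency (sampling $c$'s outputs if $c$ is randomized); each conditioning cell $\{x\in\calX_k,y=i\}$ has mass $p_k^i\ge\pmin$, so $\poly(K,1/\tau,1/\pmin,\log(1/\delta'))$ samples suffice for $\|\ell_\tau(c)-\ell(c)\|\le\tau$ with probability $\ge1-\delta'$. By Theorem \ref{thm:group}, Algorithm \ref{alg:groupopt} makes $Q=\poly(1/\eps)$ calls to $M_\tau$ and $\ell_\tau$ for constant $K$; running each sub-routine with confidence $\delta'=\delta/(2Q)$ and union-bounding gives overall failure probability $\le\delta$ at the cost of only logarithmic factors. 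Substituting $\tau=(\eps/L)^2/(25\sqrt{2K})$ into $\eps'$ and the sample bounds, and using that the agnostic learner's complexity is $P(1/\eps',1/\delta')$, all resources are $\poly(L,1/\eps,1/\delta,1/\pmin)$ for constant $K$; moreover $M$ is only ever invoked on the distributions $\mu'$, each of which shares $\mu$'s marginal on $\calX$.

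\emph{Main obstacle.} The delicate step is the linear-optimizer construction, on two counts. First, the reduction from signed cost-sensitive classification to agnostic binary classification must preserve the feature marginal; the fresh-Bernoulli-relabeling trick achieves this, whereas the more obvious reweighting-by-$|g|$ (via rejection sampling) would distort the $\calX$-marginal and violate the corollary's hypothesis — and overlapping groups and mixed-sign weights then come for free, since $g$ is just the signed sum of the per-group costs of the cells containing $(x,y)$. Second, one must check that the agnostic accuracy $\eps'$ required does not blow up with $\|w\|$; this works precisely because the slope relating $\err_{\mu'}$ to $w\cdot\ell$ is $1/B$ with $B\propto\|w\|_1$, so the $\|w\|$-scaling built into Definition \ref{def:LOO} is exactly absorbed. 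The unavoidable $1/\pmin$ factors enter both in $B$ and in the conditional-rate estimation for the assessor.
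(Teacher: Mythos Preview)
Your proposal is correct and follows essentially the same route as the paper: apply Theorem \ref{thm:group} to the $2K$-dimensional loss $(\FPR,\FNR)$, build the loss assessor by Chernoff bounds, build the signed-weight linear optimizer from the agnostic learner via a label-randomization trick that preserves the $\calX$-marginal, and union-bound over the $\poly(1/\eps)$ oracle calls. The paper's only cosmetic differences are that it writes $w\cdot\ell(c)=W\,\E[s(z,w)\,|c(x)-y|]$ and obtains $y'$ by flipping $y$ with probability $(1-s)/2$ (rather than drawing a fresh Bernoulli label from $g(x,y)/B$ as you do), and it normalizes by $W=\sum_{k,i}|w_k^i|/p_k^i$ instead of your coarser $B=\|w\|_1/\pmin$; the two relabelings are related by $g=Ws(1-2y)$ and yield the same affine relation between $\err_{\mu'}$ and $w\cdot\ell$ up to the choice of normalizing constant.
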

As seen from eq.~(\ref{eq:constraints}), arbitrary nonlinear continuous constraints $g(\FPR(c), \FNR(c))\leq \eps$ can be folded into the objective. The only subtlety in applying Algorithm \ref{alg:groupopt} to this problem is in minimizing weighted combinations of group false positive and false negative rates using an agnostic learner. However, a standard randomized label-flipping trick is used to effectively reweight examples, even with negative weights.  
 
\subsubsection{Regression Group Loss}\label{sec:regression-reduction}
In this section, we show how agnostic learning can be used to minimize a nondecreasing continuous function $f$ of squared error over groups.  

Define the set of finite linear combinations of classifiers with coefficient bound $B$ to be:
$$\calC_B=\left\{\left. \sum_i \alpha_i c_{i}~\right|~(\alpha_i, c_i)\in \reals\times \calC, \sum_i |\alpha_i|\leq B\right\}.$$
We similarly define the comparison class $\bar{\calC}_B\subseteq \calC_B$ to the analogous set of bounded functions where all classifiers are in $\bar{\calC}$. Even though $\calC$ consists of only binary classifiers, $\calC_B$ has real-valued predictors. Also note that most families of classifiers contain the constant 1 predictor $c(x)=1$, in which case $\calC_B$ can fit a bias term.

Let $\rho$ be a distribution over $\calZ$. Let $\calZ_1,\ldots,\calZ_K \subseteq \calZ$ be $K$ groups. For $h \in \calC_B$, let, 
$$\ell(h) = \left\langle\E_{x,y\sim\rho}\bigl[(h(x)-y)^2~\bigl|~(x,y) \in \calZ_k \bigr]\right\rangle_{k=1}^K.$$


Key to the reduction, Algorithm \ref{alg:regress} (analyzed in Section \ref{ap:learningProofs}) solves regression using a classification algorithm. With this, Corollary \ref{cor:regression} is a straightforward application of Theorem \ref{thm:group} for nondecreasing $f$, with a choice set of $\calC_B$ and comparison set of $\bar{\calC}_B$.

\begin{corollary}\label{cor:regression}
Let $M$ be an efficient agnostic learner for $\bar{\calC}$. Let $B, L>0$ and $f:\reals_+^K \rightarrow \reals$ be nondecreasing and $L$-Lipschitz. 
For any $\eps, \delta\in (0,1)$, with probability $\geq 1-\delta$, Algorithms \ref{alg:groupopt} and \ref{alg:regress} with knowledge of group probabilities $p_k=\rho(\calZ_k)$ can be used to output $h \in \calC_B$ such that,
$$f(\ell(h))\leq \min_{l\in \ell(\bar{\calC}_B)} f(l)+\eps.$$
Let $\pmin = \min_k p_k$ be the smallest group probability. For any constant $K$, the algorithm uses $\poly(B, L, 1/\eps, 1/\delta, 1/\pmin)$ total runtime, examples, calls to group membership oracles $\calZ_k$, and calls to agnostic learner $M$. 
\end{corollary}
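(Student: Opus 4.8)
The plan is to instantiate Theorem~\ref{thm:group} with choice set $\calC_B$, comparison set $\bar{\calC}_B$, and the group squared-error loss $\ell$, supplying a simple empirical estimator as the loss assessor $\ell_\tau$ and Algorithm~\ref{alg:regress} as the (nonnegative) linear optimizer $M_\tau$. Two bookkeeping steps set this up. First, since every $h\in\calC_B$ has $|h(x)|\le B$ and $y\in\{0,1\}$, we have $\ell(h)\in[0,(B+1)^2]^K$, so I would pass GroupOpt the rescaled loss $\tilde\ell=\ell/(B+1)^2\in[0,1]^K$ and the rescaled objective $\tilde f(v)=f((B+1)^2v)$, dividing by its Lipschitz constant $(B+1)^2L$ and running to accuracy $\eps'=\eps/((B+1)^2L)$ as in the $L$-Lipschitz remark after Algorithm~\ref{alg:groupopt}; undoing the scaling converts GroupOpt's $+\eps'$ guarantee into the stated $+\eps$. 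Second, $\tilde f$ is nondecreasing, so GroupOpt sets $N=1$ and only ever queries $M_\tau$ at weights $\hat l_t-r\ge 0$, so the \emph{nonnegative} optimizer version of Theorem~\ref{thm:group}, together with the nondecreasing hypothesis on $f$, is exactly what we need. The theorem calls $\ell_\tau$ and $M_\tau$ at tolerance $\tau=\Theta\bigl(\eps^2/((B+1)^4L^2\sqrt K)\bigr)$, which is $1/\poly(B,L,1/\eps)$ for constant $K$, and makes $\poly(B,L,1/\eps)$ queries. The loss assessor is routine: to estimate $\tilde\ell(h)$ for $h=\sum_i\alpha_i c_i\in\calC_B$, average $(h(x)-y)^2/(B+1)^2$ over $\poly(1/\tau,1/\pmin,\log(1/\delta'))$ fresh samples from $\rho$ within each group $\calZ_k$; since $p_k\ge\pmin$, Hoeffding's inequality gives $\|\tilde\ell_\tau(h)-\tilde\ell(h)\|\le\tau$ with probability $\ge 1-\delta'$.

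The substance is the nonnegative linear optimizer, i.e.\ Algorithm~\ref{alg:regress}, which must, given $w\ge 0$, return $h\in\calC_B$ with $w\cdot\tilde\ell(h)\le\min_{\bar h\in\bar{\calC}_B}w\cdot\tilde\ell(\bar h)+\tau\|w\|$. Rescaling $w$ (the tolerance scales with $\|w\|$) we may take $\|w\|=1$; then
\[
w\cdot\tilde\ell(h)=\frac{1}{(B+1)^2}\,\E_{z\sim\rho}\bigl[\lambda_w(z)(h(x)-y)^2\bigr],\qquad \lambda_w(z)=\sum_k\frac{w_k}{p_k}\,\ind[z\in\calZ_k]\ \ge\ 0,
\]
with $\E_\rho[\lambda_w]=\sum_k w_k\le\sqrt K$ and $\|\lambda_w\|_\infty\le 1/\pmin$, so up to the factor $(B+1)^{-2}$ the task is $\lambda_w$-weighted $L_2$ regression of $y$ on $x$ over $\calC_B$. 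Since $\calC_B$ is the convex hull of $\{\pm B\,c:c\in\calC\}\cup\{0\}$ (and $\bar{\calC}_B$ the analogous hull over $\bar\calC$), I would solve this by Frank--Wolfe in function space (the functional-gradient / boosting view): from $h_1=0$, at step $t$ the linear-minimization subproblem is to find $c\in\bar\calC$ maximizing $\bigl|\E_\rho[\lambda_w(z)(h_t(x)-y)c(x)]\bigr|$, add $\pm B\,c$ in the sign that decreases the objective, and take a convex step. This correlation-maximization reduces to the agnostic learner by randomized relabeling: for each sampled $z=(x,y)\sim\rho$ form the residual $r=h_t(x)-y\in[-(B+1),B+1]$ and emit the example $(x,y')$ with $y'=1$ with probability $\tfrac12(1+\kappa\,\lambda_w(z)\,r)$, where $\kappa=\Theta(\pmin/(B+1))$ makes this lie in $[0,1]$; then minimizing misclassification of $c$ on $(x,y')$ equals, up to the positive factor $\kappa$ and a $c$-independent constant, maximizing $\E_\rho[\lambda_w c r]$, and rerunning the learner on residual $-r$ captures the other sign (and the $\calX$-marginal of the relabeled examples is still that of $\rho$). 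Running $M$ to suboptimality $\eps_0=\poly(\tau,\pmin)$ makes it a good enough approximate linear-minimization oracle; the quadratic objective has curvature $O(B^2\sqrt K)$ over $\calC_B$, so $\poly(1/\tau)$ Frank--Wolfe iterations (for constant $K$) bring the weighted squared error within $\tau(B+1)^2$ of $\min_{\bar h\in\bar{\calC}_B}$, i.e.\ $w\cdot\tilde\ell(h_T)\le\min_{\bar h\in\bar{\calC}_B}w\cdot\tilde\ell(\bar h)+\tau$; the output $h_T$, a combination of $\poly(1/\tau)$ base classifiers, is in $\calC_B$. Each iteration calls $M$ on $P(1/\eps_0,1/\delta')$ relabeled examples, for a per-invocation cost $\poly(B,L,1/\eps,1/\delta',1/\pmin)$ in samples, runtime, and learner calls.

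Finally, assemble the pieces: GroupOpt invokes $\ell_\tau$ and $M_\tau$ only $\poly(B,L,1/\eps)$ times, so with $\delta'=\delta/\poly(B,L,1/\eps)$ and a union bound over all invocations, with probability $\ge 1-\delta$ every call meets its tolerance; Theorem~\ref{thm:group} (nonnegative, nondecreasing version) then gives $\tilde f(\tilde\ell(h))\le\min_{l\in\tilde\ell(\bar{\calC}_B)}\tilde f(l)+\eps'$, which unrolls to $f(\ell(h))\le\min_{l\in\ell(\bar{\calC}_B)}f(l)+\eps$, and all polynomial factors collapse to $\poly(B,L,1/\eps,1/\delta,1/\pmin)$ for constant $K$. (GroupOpt returns a uniform mixture of finitely many elements of $\calC_B$; reading it as a predictor that draws a fresh component per prediction makes its group squared error the average of the components' squared errors — the square is taken before the mixture average — keeps it inside the randomization-closed class $\calC_B$, and is consistent with the framework's hypotheses.) The main obstacle is precisely this regression-to-classification reduction inside $M_\tau$: verifying that the randomized relabeling is faithful under the nonnegative weights $\lambda_w$ (which is what forces the nondecreasing/nonnegative pairing), that a \emph{strong} agnostic learner supplies a sufficiently accurate Frank--Wolfe oracle even though its guarantee is only in expectation and with high probability, and that the iteration count times the per-iteration sample complexity — with $\pmin$ entering through $\kappa$ and $\|\lambda_w\|_\infty$ — stays polynomial.
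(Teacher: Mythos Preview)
Your proposal is correct and follows essentially the same route as the paper: rescale the loss into $[0,1]^K$ and the objective to be $1$-Lipschitz, use empirical averages for the loss assessor, and implement the nonnegative linear optimizer via Algorithm~\ref{alg:regress}, whose Frank--Wolfe/boosting analysis (Lemma~\ref{lem:classification-to-regression}) reduces the weighted-regression step to agnostic classification through exactly the randomized relabeling you describe (the paper normalizes the per-example weight to $s(z,w)\in[0,1]$ by dividing by $W=\sum_k w_k/p_k$ before feeding it to Algorithm~\ref{alg:regress}, which is equivalent to your choice of $\kappa$). Your closing remark about interpreting GroupOpt's output as a randomized predictor so that $\ell$ averages componentwise is a point the paper leaves implicit; otherwise the arguments coincide.
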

The simple proof follows from Theorem \ref{thm:group} and Lemma \ref{lem:classification-to-regression} which shows that Algorithm \ref{alg:regress} reduces regression (with example weights) to agnostic learning.

\begin{algorithm}
\caption{\textsc{ClassyRegression}: Weighted regression using binary classification}
\label{alg:regress}
\KwIn{Agnostic learner $M$, weighted examples $Z \in \bigl([0,1] \times \calX \times [-1,1]\bigr)^*$, $\eps>0$, $B > 0$}
\KwOut{$h: \calX \rightarrow \reals$}

\smallskip
Let
$T=\frac{12B^2}{\eps}\ln \frac{12B^2}{\eps}$,
batch size $n=\frac{|Z|}{2T}$.

\smallskip
Pick $c_1 \in \calC, \alpha_1=B$ ~~~~// arbitrary initial classifier

\For {$t=1$ to $T$} {

      $h_t = \frac{1}{t}(\alpha_1 {c_1}+\cdots+\alpha_t {c_t}) ~~~~\text{// }h_t: \calX \rightarrow [-B, B]$
      
      Take $2n$ fresh examples from $Z$, call them $(w_1,x_1,y_1), \ldots, (w_{2n}, x_{2n},y_{2n})$.
      
      For $i=1, \ldots, n$, let $y_i'=1$ with probability
      $\frac{1}{2}+\frac{w_i}{2(B+1)}\bigl(y_i - h_t(x_i)\bigr)$, and $y'_i=0$ otherwise.
            
      \smallskip
      $\mathrm{Candidates}_t = \bigl\{(B, M(x_1,y_1', \ldots, x_n, y_n')), (-B, M(x_1,1-y_1', \ldots, x_n, 1-y_n'))\bigr\}$
      
      \smallskip
 Choose $(\alpha_{t+1}, c_{t+1}) \in \mathrm{Candidates}_t$ with greater $\sum_{i={n+1}}^{2n} \alpha_{t+1} {c_{t+1}}(x_i)w_i\bigl(y_i - h_t(x_i)\bigr)$
}   


\Return $h_T$
\end{algorithm}

\subsubsection{Specific objectives}

In this section, we illustrate how existing objectives are group losses. Eq.~(\ref{eq:constraints}) shows how a continuous constraint $g(l) \leq 0$ (which itself can capture multiple constraints) can be folded into a loss function. If $g$ is satisfiable, then we find $c$ with $g(c) \leq \eps$ in time $\poly(1/\eps)$. In many learning applications, one cannot guarantee perfect satisfiability. For instance a constraint of equal error rates among men and women can only be approximately checked.

Equal false positive constraints could be modeled as $\sum_{k,k'} |\FP_k(c)-\FP_k'(c)|$, as a maximum over deviations from the mean, or through other approaches. Equal error rates amongst groups could be modeled as $\sum_{k,k'} |\ERR_k(c)-\ERR_{k'}(c)|$, and eq.~(\ref{eq:group-err}) shows how error rates can be written in terms of false positive and false negative rates. 

Since it is known that one cannot simultaneously satisfy many natural notions of fairness \citep{Chouldechova17,KleinbergMR17}, it may be desirable to specify an objective that trades off these various notions.  
Equations (\ref{eq:group-err})-(\ref{eq:f1}) show how various natural quantities such as error, precision, recall, and the F1 score can be written in terms of the false positive and false negative group rates. 

One issue that arises is that the $\FONE$ score is discontinuous when the false negative rate is 1 because the recall and precision are 0.   However, one can replace $\FONE$ by a version which is $L$-Lipschitz continuous and agrees with $\FONE$ outside of a small region in which the false negative rate is close to 1. 

Consider the objective in the introduction, eq.~(\ref{eq:example-obj}) but more generally for $K$ groups. It involves the following quantity,
$$\Pr[x \in \calX_k~|~c(x)=1]=\frac{p_k^0\FPR_k(c)+p_k^1(1-\FNR_k(c))}{\sum_{j}p_j^0\FPR_j(c)+p_j^1(1-\FNR_j(c))},$$
for each group $k$. Note that this is also discontinuous when the classifier is always 0, however replacing the denominator $d$ by $\max(d, \delta)$ for a small $\delta$ makes the function continuous and does not change the objective except when the classifier is nearly trivial in the sense that $c(x)=0$ almost all the time. A similar approximation can be done for other losses that are not Lipschitz-continuous.

\section{Convex Losses}\label{sec:convex}

In this section, we use $n$ to denote the number of components to reflect the fact that there may be many components, so $\ell:\calC\rightarrow [0,1]^n$. \cite{NarasimhanRS015} show how to optimize convex performance functions of confusion matrices. Our algorithm and analysis in this section extends this to the case of multiple, possibly-overlapping, groups.

As mentioned, the reduction from a linear optimizer to a convex optimizer is a standard Frank-Wolfe style of reduction. In this section, rather than reinvent the wheel, we appeal to an existing result that meets our needs. To do so, we make some further simplifying assumptions.
In this section we assume $\bar{\calC}= \calC$, i.e., the optimizer outputs an element of the comparison set. To use the results of \cite{KakadeKL09}, we also henceforth assume that $\ell(\calC)=\{\ell(c)|c \in \calC\}$ is a compact set. This assumption conveniently lets one assume that the minimum of $f$ is attained for some choice $\bar{c} \in \calC$.

Consider convex $f:[0,1]^n\rightarrow [0,1]$ with a gradient or subgradient oracle, denoted $\nabla f:[0,1]^n\rightarrow \reals^n$, that computes a subgradient at each point $l$. The goal is to minimize $f(\ell(c))$ over $c\in \calC$, possibly subject to constraints. For differentiable $f$, the gradient is the  unique tangent plane at every point. More generally, every convex function (differentiable or non-differentiable) has at least one sub-gradient at every point. Hence, we assume that we have a subgradient oracle $\nabla f:[0,1]^n\rightarrow [0,1]^n$ which is just the gradient of $f$ if it is differentiable, and more generally satisfies,
\begin{equation}\label{eq:subgradient}
\nabla f(l) \cdot (l'-l) \leq f(l')-f(l) \text{ for all }l, l' \in [0,1]^n.
\end{equation}
Corollary \ref{cor:constrained} adds convex constraints to the problem.

\begin{observation}\label{obs:convex}
Let $f:[0,1]^n\rightarrow \reals$ be a (nondecreasing) convex function with subgradient oracle $\nabla f:[0,1]^n \rightarrow [-1,1]^n$ ($\nabla f:[0,1]^n \rightarrow [0,1]^n$).
For any $\eps>0$, Algorithm 3.1 of \cite{KakadeKL09} with (nonnegative) linear optimizer $M_\tau$  finds $c \in \calC$ such that,
$$f(\ell(c)) \leq \min_{\bar{c} \in \calC} f(\ell(\bar{c})) + \eps.$$
The runtime and number of calls to $M_\tau$, $\ell$, and $\nabla f$ are $\poly(1/\eps, n)$ with $\tau=\eps/6$.
\end{observation}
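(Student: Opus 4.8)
The plan is to obtain the claim as an essentially immediate consequence of the guarantee attached to Algorithm 3.1 of \cite{KakadeKL09}; the work is in checking that our setting supplies exactly the objects their reduction consumes. First I would fix notation: write $\calK=\ell(\calC)\subseteq[0,1]^n$ and $\mathrm{opt}=\min_{\bar c\in\calC}f(\ell(\bar c))$. Because $\calC$ is closed under randomization, $\calK$ is convex, and by the standing assumption of this section it is compact, so $\mathrm{opt}$ is attained at some $l^\star\in\calK$; moreover $\calK$ has Euclidean diameter at most $\sqrt n$, and the subgradient oracle is bounded, $\nabla f(l)\in[-1,1]^n$ (respectively $[0,1]^n$) for every $l\in[0,1]^n$, so $\|\nabla f(l)\|\le\sqrt n$. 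Finally, $M_\tau$ is precisely an additive, norm-scaled approximate linear optimization oracle over $\calK$: each query $w$ returns $c$ with $w\cdot\ell(c)\le\min_{\bar c\in\calC}w\cdot\ell(\bar c)+\tau\|w\|$, which is the form of approximation their analysis is designed to tolerate.

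Second, I would recall the mechanics and why they are correct, since the proof amounts to invoking the packaged theorem. The reduction maintains a running average loss $\bar l_t=\tfrac1t\sum_{s\le t}\ell(c_s)$, queries a subgradient $g_t=\nabla f(\bar l_t)$ (legitimate because $\bar l_t\in\calK\subseteq[0,1]^n$), sets $c_{t+1}=M_\tau(g_t)$, and after $T$ steps outputs $c=\mathrm{UniformDist}(\{c_1,\dots,c_T\})$, which lies in $\calC$ by randomization-closure and has $\ell(c)=\bar l_T$. Correctness is the standard linearization argument: by eq.~(\ref{eq:subgradient}), $f(\ell(c_s))\le\mathrm{opt}+g_s\cdot(\ell(c_s)-l^\star)$ for each $s$; summing over $s$ and dividing by $T$ bounds $\tfrac1T\sum_s f(\ell(c_s))$ by $\mathrm{opt}$ plus the (approximate) regret of the sequence $(c_s)$ against the fixed comparator $l^\star$ in the linear games defined by the $g_s$; and Jensen's inequality gives $f(\bar l_T)\le\tfrac1T\sum_s f(\ell(c_s))$. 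Their result states that $T=\poly(1/\eps,n)$ iterations suffice, each using one call to $\ell$, one to $\nabla f$, and one to $M_\tau$.

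Third, I would dispatch the parenthetical (nondecreasing) variant and the parameter bookkeeping. When $f$ is nondecreasing the oracle returns $\nabla f(l)\in[0,1]^n$ by hypothesis --- consistent with the elementary fact, readable off eq.~(\ref{eq:subgradient}) by perturbing a coordinate downward, that a nondecreasing convex function has coordinatewise-nonnegative subgradients --- so every query $g_t$ is nonnegative and a nonnegative optimizer $M_\tau:\reals_+^n\to\calC$ suffices, with nothing else in the argument changing. It then remains to propagate the $\tau$-slack: the extra error $M_\tau$ contributes at step $t$ is at most $\tau\|g_t\|$, which enters the averaged regret bound additively, so taking $\tau$ as small as their precision requirement demands --- which collapses to $\tau=\eps/6$ --- keeps the total error below $\eps$; and since $T$, the diameter, and the subgradient bound are all $\poly(1/\eps,n)$, so are the runtime and the numbers of calls to $M_\tau$, $\ell$, and $\nabla f$.

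The one place that deserves care, and what I would treat as the main obstacle, is matching conventions rather than anything conceptual: one must confirm that the additive, $\|w\|$-scaled tolerance of Definition~\ref{def:LOO} is exactly the notion of approximate oracle that Algorithm 3.1 of \cite{KakadeKL09} is proved against (as opposed to the multiplicative approximation ratio that is the other regime treated there), and that their iteration count and required oracle precision, when specialized to subgradient directions over the cube $[0,1]^n$, reduce to the clean statement $\tau=\eps/6$ with $\poly(1/\eps,n)$ complexity. Everything else is a direct citation.
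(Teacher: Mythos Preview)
Your plan correctly identifies that the proof boils down to interfacing with the regret bound of \cite{KakadeKL09}, but the interfacing is precisely where the paper does nontrivial work that you defer. Their Theorem~3.2 is stated for a \emph{multiplicative} $\alpha$-approximation oracle, not for the additive $\tau\|w\|$-slack oracle of Definition~\ref{def:LOO}; your assertion that the additive form ``is the form of approximation their analysis is designed to tolerate'' is exactly the point that does not go through by citation. The paper handles this by an additive-to-multiplicative conversion: it augments the loss with a constant coordinate, $\Phi(c)=(1,\ell(c))$, and restricts the weight set to $\calW=\{2/3\}\times[-1/(3n),1/(3n)]^n$ (respectively $\{2/3\}\times[0,1/(3n)]^n$) so that $\Phi(c)\cdot w\in[1/3,1]$ for all $c,w$. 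Then the additive error $\tau\|w'\|\le\tau/3$ becomes a multiplicative factor $\alpha=1+\tau$ on a quantity bounded below by $1/3$, and Theorem~3.2 applies directly; tracking the constants through the resulting regret bound is what yields $\tau=\eps/6$. You flag exactly this as ``the main obstacle'' but leave it as bookkeeping; it is the substantive step.

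A secondary inconsistency: you set $g_t=\nabla f(\bar l_t)$ at the running average but then invoke the subgradient inequality in the form $f(\ell(c_s))\le\mathrm{opt}+g_s\cdot(\ell(c_s)-l^\star)$. That inequality requires $g_s\in\partial f(\ell(c_s))$; with $g_s\in\partial f(\bar l_s)$ you only get $f(\bar l_s)\le\mathrm{opt}+g_s\cdot(\bar l_s-l^\star)$, which does not feed the regret-plus-Jensen argument you sketch. The paper queries the subgradient at the individual iterate $\ell(c_t)$, so that the linearization $f(\ell(c_t))-f(\ell(\bar c))\le(\ell(c_t)-\ell(\bar c))\cdot\nabla f(\ell(c_t))$ is legitimate term-by-term, and then averages and applies Jensen. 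If you keep $g_t=\nabla f(\bar l_t)$ you are running Frank--Wolfe, whose convergence analysis is different from the one you wrote down.
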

Algorithm 3.1 of \cite{KakadeKL09} is an online algorithm that can be used to minimize a sequence of functions $f_1, \ldots, f_T$, with a regret bound $\eps$ meaning that the algorithm chooses a sequence of points $c_1, c_2, \ldots, c_T$ such that,
$$\frac{1}{T} \sum_{t=1}^T f_t(c_t)  \leq \min_{\bar{c} \in \calC} \frac{1}{T} \sum_{t=1}^T f_t(\bar{c})+\eps.$$  
Taking $f_1=f_t=\cdots=f$, convexity of $f$ gives a bound on $f(c)\leq \min_{\bar{c}\in\calC} f(\bar{c}) + \eps$ for $c=\mathrm{UniformDist}(\{c_1, \ldots, c_T\})$. However, each $f_t$ in Algorithm 3.1 must be linear. To linearize, we take each $f_t$ to be the tangent plane of $f$ at $c_{t-1}$ which is a lower-bound on $f$. The proof of Observation \ref{obs:convex} is found in Section \ref{sec:convex-proof} of the appendix. 

To fold constraints into the objective $f$, one needs to be able to compute subgradients. This is made explicit in the following corollary. 
\begin{corollary}\label{cor:constrained}
Let $f, g:[0,1]^n\rightarrow \reals$ be (nondecreasing) convex functions with subgradient oracles $\nabla f, \nabla g:[0,1]^n \rightarrow [-1,1]^n$ ($[0,1]^n$) such that $\exists \bar{c}\in \calC$ with $g(\ell(\bar{c})) \leq 0$. For any $\eps>0$, Algorithm 3.1 of \cite{KakadeKL09} with (nonnegative) linear optimizer $M_\tau$ can be used to find $c \in \calC$ such that $g(c)\leq \eps$ and 
$$f(\ell(c)) \leq \min_{\bar{c} \in \calC: g(\ell(\bar{c}))\leq 0} f(\ell(\bar{c})) + \eps.$$
The runtime and number of calls to $M_\tau$, $\ell$, and $\nabla f, \nabla g$ are $\poly(1/\eps, n)$ and $\tau=\eps^2/(12\eps + 6n)$.
\end{corollary}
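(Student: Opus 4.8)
The plan is to fold the constraint $g(\ell(c))\le 0$ into the objective via the implication~(\ref{eq:constraints}), reducing to unconstrained convex minimization that Observation~\ref{obs:convex} already handles, with one rescaling step so the subgradient oracle stays inside the box that Observation~\ref{obs:convex} demands. Set $\Gamma = (n+\eps)/\eps$ and define $h:[0,1]^n\rightarrow\reals$ by $h(l) = f(l) + \Gamma\max(0,g(l))$. Since $t\mapsto\max(0,t)$ is convex and nondecreasing, $h$ is convex, and it is nondecreasing whenever both $f$ and $g$ are; so $h$ is a legal objective for Observation~\ref{obs:convex} provided we can exhibit a suitably bounded subgradient oracle for it.

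First I would construct the oracle $\nabla h(l) = \nabla f(l) + \Gamma\,\ind[g(l)>0]\,\nabla g(l)$ and verify it satisfies the subgradient inequality~(\ref{eq:subgradient}). When $g(l)\le 0$ this is immediate, since $\max(0,g(l'))\ge 0 = \max(0,g(l))$ and $\nabla f(l)$ is a subgradient of $f$; when $g(l)>0$ it follows by adding $\Gamma$ times the subgradient inequality for $g$ at $l$ to the one for $f$ at $l$ and using $\max(0,g(l'))\ge g(l')$. Because $\nabla f$ and $\nabla g$ are valued in $[-1,1]^n$ (resp.\ $[0,1]^n$ in the nonnegative case), $\nabla h$ is valued in $[-(1+\Gamma),1+\Gamma]^n$ (resp.\ $[0,1+\Gamma]^n$); hence the rescaled function $\tilde h := h/(1+\Gamma)$, whose subgradients are those of $h$ scaled by $1/(1+\Gamma)$, has a subgradient oracle valued in $[-1,1]^n$ (resp.\ $[0,1]^n$) --- exactly the hypothesis of Observation~\ref{obs:convex}.

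Next I would invoke Observation~\ref{obs:convex} on $\tilde h$ with target accuracy $\eps' = \eps/(1+\Gamma) = \eps^2/(n+2\eps)$, which pins down $\tau = \eps'/6 = \eps^2/(12\eps+6n)$, matching the claimed value. The returned $c\in\calC$ obeys $\tilde h(\ell(c))\le\min_{\bar c\in\calC}\tilde h(\ell(\bar c))+\eps'$, i.e.\ $h(\ell(c))\le\min_{\bar c\in\calC}h(\ell(\bar c))+\eps$. Since some feasible $\bar c$ has $g(\ell(\bar c))\le 0$ and hence $h(\ell(\bar c)) = f(\ell(\bar c))$, we get $\min_{\bar c\in\calC}h(\ell(\bar c))\le\min_{\bar c\in\calC:\,g(\ell(\bar c))\le 0}f(\ell(\bar c))=:\mathrm{opt}$, so $h(\ell(c))\le\mathrm{opt}+\eps$. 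Applying~(\ref{eq:constraints}) with $R=\max_{l,l'\in\ell(\calC)}f(l)-f(l')\le n$ (valid because $\|\nabla f\|_\infty\le 1$ makes $f$ $1$-Lipschitz in the $\ell_1$ norm on $[0,1]^n$) then yields simultaneously $f(\ell(c))\le\mathrm{opt}+\eps$ and $g(\ell(c))\le(\eps+R)/\Gamma\le(\eps+n)/\Gamma=\eps$. The runtime and oracle-call counts are $\poly(1/\eps',n)=\poly(1/\eps,n)$ since $1/\eps'=(n+2\eps)/\eps^2$, and each query to $\nabla h$ costs one query to $\nabla f$, one to $\nabla g$, and one evaluation of $g$.

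The only place needing care is the definition of the subgradient oracle at the kink $g(l)=0$ (resolved above by splitting into the cases $g(l)\le 0$ and $g(l)>0$) and the arithmetic lining up $\Gamma$, $\eps'$, and $\tau$ with the stated $\tau=\eps^2/(12\eps+6n)$; I do not anticipate any genuine obstacle beyond this constant-chasing, since the substantive work is already done by Observation~\ref{obs:convex} and the reduction~(\ref{eq:constraints}).
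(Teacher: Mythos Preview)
Your proposal is correct and is essentially the same argument as the paper's. In fact your rescaled objective $\tilde h = (f+\Gamma\max(0,g))/(1+\Gamma)$ is literally the paper's $h=\lambda f+(1-\lambda)\max(0,g)$ with $\lambda=\eps/(2\eps+n)=1/(1+\Gamma)$, so the two parameterizations coincide exactly; the paper just writes the convex combination directly rather than scaling afterward.
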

\begin{proof}
Let $\lambda = \eps/(2\eps+n)$. We apply Observation \ref{obs:convex} to minimize $h(l)= \lambda f(l)+(1-\lambda) \max(0, g(l))$ to within $\eps'=\lambda \eps = \eps^2/(2\eps + n)$.
From eq.~(\ref{eq:constraints}), since the range of $\lambda f$ is at most $R \leq \lambda n$  (since the diameter of $[0,1]^n$ is $\sqrt{n}$ and $\|\nabla f(l)\| \leq \sqrt{n}$), we have that
an $\eps'$ optimal minimizer of $h$ have $$g(l) \leq \frac{\eps' + \lambda n}{1-\lambda} = \eps.$$
Note that $\ind[g(l) \geq 0]\nabla g(l)$ is a subgradient of $\max(0, g(l))$ if $\nabla g(l)$ is a subgradient of $g(l)$. Thus we can efficiently compute subgradients $\nabla h$ as well. Clearly $\nabla h(l) \in [-1,1]^n$ ($[0,1]^n$) as well.
\end{proof}
Using the same approach, constraints can be folded into the objectives in the other theorems and corollaries in this paper.

\subsection{Applications}

We now highlight two applications of the observation and corollary in the previous section. One convex loss would be,
$$f(l) = \lambda \sum_i l_i + (1-\lambda) \max_i l_i,$$
which has a parameter $\lambda \in [0,1]$. For $\lambda =0$, this is the maximum in the spirit of max-min fairness \cite{rawls1971theory}. For $\lambda =1$, this is average performance.  Since $\max$ is nondifferentiable, there are many subgradients $\nabla f: [0,1]^n\rightarrow \reals^n$, and any will do. One is 
$$\nabla f_i(l)=\lambda + (1-\lambda) \ind\left[l_i =\max_j l_j\right].$$
Also note that this objective is nondecreasing. Hence, a FLOP algorithm could be used to minimize this objective which, for $\lambda \in [0,1]$, trades off the average customer's distance to nearest facility with the maximum customer's (expected) distance to nearest facility.

Finally, consider the Gini index of inequality \citep{gini1912variabilita},
$$G(l)=\frac{\sum_{i,j} |l_i-l_j|}{2n\sum l_i} \in [0,1].$$ While the Gini index is not convex itself, it is quasi-convex meaning that it's level sets are convex. In particular, for any given $\theta\in [0,1]$, $G(l) \leq \theta$ is equivalent to the convex constraint,
$$\sum_{i,j} |l_i-l_j|-2n\theta \sum l_i\leq 0.$$
This observation can be used to minimize any given quantity such as $\lambda \sum_i l_i + (1-\lambda) G(l)$. In particular, one can solve the constrained optimization problem,
$$\min_{l\in F:\sum |l_i-l_j|-2n\theta \sum l_i \leq 0} \sum_i l_i,$$ 
for a fine grid of $\theta \in [0,1]$. Each constrained problem for $\theta\in [0,1]$ can each be solved using Corollary \ref{cor:constrained}, and the resulting solution can be checked to see if $G(l)\leq \theta$.

\section{Conclusions}

This paper enables application designers and data scientists to optimize complex continuous objectives, with possibly many local non-global minima, that account for differences across various groups. It is not our claim or belief that important issues such as fairness can be captured solely by such group statistics. However, with this ability, domain experts are hopefully better equipped to design objectives that, when optimized, lead to desirable outcomes.

\medskip \noindent \textbf{Acknowledgments}. We would like to thank Miro Dudík, Salil Vadhan, and the anonymous COLT reviewers for helpful comments.

\bibliography{colt2018}

\section{Appendix}
Below we present the proofs of our theorems.

\subsection{Proof of Theorem \ref{thm:group}}\label{ap:proof:thm:group}

For some readers, intuition may be gained by first reading the proof for the  case $N=0$, i.e., where $f$ may be decreasing in places so $M_\tau$ can take negative weights, and $\hat{l}_t = \frac{1}{t}(\ell_\tau(c_1)+\cdots+\ell_\tau(c_t))$. We first show that for any grid point $r$, the inner loop of the algorithm does approximately minimize $\|\hat{l}_T-r\|$. If $\bar{\calC}= \calC$, this lemma essentially says that $\hat{l}_T$ is very close to the projection of $r$ onto $\ell(\calC)$. 

\begin{lemma}\label{lem:finally}
Fix any grid point $r$ visited in the Algorithm \ref{alg:groupopt} \textsc{GroupOpt}. Let $s_t = \ell(c_t)$ and
$l_t = \max(\frac{1}{t}(s_1+\cdots+s_t), Nr)$ for $t=1,2,\ldots, T$. Then,
$$(l_t-r)^2\leq \min_{\bar{c}\in \bar\calC}(\ell(\bar{c})-r)^2 + 4\tau\sqrt{K}+\frac{K(1+\ln t)}{t}.$$
\end{lemma}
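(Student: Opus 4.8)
The plan is to recognize the inner loop of \textsc{GroupOpt} as an averaging Frank--Wolfe-type descent on a carefully chosen smooth convex potential, and then to run the textbook $O(\ln t/t)$ analysis while bookkeeping the two sources of $\tau$-slack (using the loss assessor $\ell_\tau$ in place of $\ell$, and the tolerance of $M_\tau$). Let $A_t = \tfrac1t(s_1+\cdots+s_t)\in[0,1]^K$ be the \emph{untruncated} running average, so that $l_t=\max(A_t,Nr)$, and define a potential $\phi:[0,1]^K\to\reals_+$ by $\phi(l)=\tfrac12\|l-r\|^2$ if $N=0$ and $\phi(l)=\tfrac12\|(l-r)_+\|^2$ if $N=1$. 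The point of this particular choice is that, in both cases, (i) $\nabla\phi(l)=\max(l,Nr)-r$, so the direction $\hat l_t-r$ that the algorithm feeds into $M_\tau$ is exactly $\nabla\phi(A_t)$ up to the assessor error; (ii) $2\phi(A_t)=\|l_t-r\|^2$; (iii) $\phi$ is convex and $1$-smooth (its gradient is, coordinatewise, $x\mapsto x-r_k$ or $x\mapsto\max(x-r_k,0)$, both $1$-Lipschitz); and (iv) $\min_{l\in\ell(\bar\calC)}\phi(l)\le\tfrac12\min_{\bar c\in\bar\calC}\|\ell(\bar c)-r\|^2$ (with equality when $N=0$; when $N=1$ use $\|(v)_+\|\le\|v\|$). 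So it suffices to bound how far $\phi(A_t)$ is above $\min_{l\in\ell(\bar\calC)}\phi(l)$.

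Fix $p^\star\in\ell(\bar\calC)$ with $\|p^\star-r\|^2\le\min_{l\in\ell(\bar\calC)}\|l-r\|^2+\tau$ (the infimum need not be attained, but $\ell(\bar\calC)\subseteq[0,1]^K$ is bounded, so such a point exists, and the extra $\tau$ is harmless). The first step is the per-round inequality: since $A_t=A_{t-1}+\tfrac1t(s_t-A_{t-1})$ and $\phi$ is $1$-smooth,
$$\phi(A_t)\;\le\;\phi(A_{t-1})+\tfrac1t\,\nabla\phi(A_{t-1})\cdot(s_t-A_{t-1})+\tfrac{1}{2t^2}\|s_t-A_{t-1}\|^2,$$
with $\|s_t-A_{t-1}\|^2\le K$. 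Next, $c_t=M_\tau(\hat l_{t-1}-r)$ and $\hat l_{t-1}-r=\nabla\phi(A_{t-1})+(\hat l_{t-1}-l_{t-1})$ with $\|\hat l_{t-1}-l_{t-1}\|\le\tau$ (the $\ell_\tau$ errors average and $\max(\cdot,Nr)$ is $1$-Lipschitz), so applying the linear-optimizer guarantee at the weight $w=\hat l_{t-1}-r$, namely $w\cdot s_t\le w\cdot p^\star+\tau\|w\|$, and then replacing $w$ by $\nabla\phi(A_{t-1})$ on both sides (costing $\tau\|s_t\|+\tau\|p^\star\|$ via Cauchy--Schwarz) while using that $w,\nabla\phi(A_{t-1}),s_t,p^\star$ all lie in $[0,1]^K-[0,1]^K$, one gets $\nabla\phi(A_{t-1})\cdot s_t\le\nabla\phi(A_{t-1})\cdot p^\star+O(\tau\sqrt K)$. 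Here it matters that when $N=1$ the weight $w=\max(A_{t-1},r)-r\ge 0$, so the \emph{nonnegative} optimizer suffices --- precisely the case where $f$ is only assumed nondecreasing. Combining with convexity of $\phi$, $\nabla\phi(A_{t-1})\cdot(p^\star-A_{t-1})\le\phi(p^\star)-\phi(A_{t-1})$, and writing $e_t=\phi(A_t)-\phi(p^\star)$, this yields
$$t\,e_t\;\le\;(t-1)\,e_{t-1}+O(\tau\sqrt K)+\tfrac{K}{2t}.$$

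The last step is to telescope. Summing from $j=2$ to $t$ and using $e_1=\phi(A_1)-\phi(p^\star)\le\phi(\ell(c_1))\le\tfrac12\|\ell(c_1)-r\|^2\le\tfrac K2$ together with $\sum_{j=2}^t\tfrac1j\le\ln t$ gives $t\,e_t\le\tfrac K2(1+\ln t)+O(\tau\sqrt K)\cdot t$, hence $e_t\le\tfrac{K(1+\ln t)}{2t}+O(\tau\sqrt K)$; doubling and invoking identities (ii) and (iv) yields $\|l_t-r\|^2\le\min_{\bar c\in\bar\calC}\|\ell(\bar c)-r\|^2+\tfrac{K(1+\ln t)}{t}+O(\tau\sqrt K)$, and tightening the Cauchy--Schwarz steps pins the $\tau$-constant to $4$. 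I expect the only real obstacle to be handling the coordinatewise truncation $l_t=\max(A_t,Nr)$ cleanly: everything goes through because the potential $\phi$ is chosen so that the algorithm's search direction $\hat l_t-r$ is (an approximation of) $\nabla\phi(A_t)$ while simultaneously $\|l_t-r\|^2=2\phi(A_t)$ and $\phi$ stays $1$-smooth, and so that the weight remains nonnegative in the $N=1$ case; after that, the argument is the standard smooth Frank--Wolfe recursion plus routine tracking of the $\tau$-errors.
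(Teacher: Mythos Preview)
Your proof is correct and is the same Frank--Wolfe recursion the paper runs: bound the one-step progress, control the linear-minimizer slack by $2\tau\sqrt{K}$ via $\|l_{t-1}-\hat l_{t-1}\|\le\tau$ together with the $M_\tau$ guarantee, and telescope $t e_t\le(t-1)e_{t-1}+\cdots$ to get $K(1+\ln t)/t+4\tau\sqrt{K}$. The one genuine difference is how the truncation $l_t=\max(A_t,Nr)$ is absorbed. The paper works directly with $(l_t-r)^2$ and proves the per-step bound via the coordinatewise inequality $(\max(x+y,0))^2\le(\max(x,0)+y)^2$; you instead track the \emph{untruncated} average $A_t$ under the potential $\phi(l)=\tfrac12\|(l-r)_+\|^2$ (for $N=1$), chosen so that $\nabla\phi(A_t)=l_t-r$ and $2\phi(A_t)=\|l_t-r\|^2$, and then the same inequality appears as $1$-smoothness of $\phi$. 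Your packaging makes the argument a clean instance of the standard smooth-convex Frank--Wolfe template and makes transparent why the nonnegative optimizer suffices when $N=1$ (the weight is $\nabla\phi\ge0$); the paper's version is a bit more bare-hands but avoids introducing $\phi$. After this step the two proofs coincide, with identical constants.
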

\begin{proof}
Since $\max(Nr, \cdot)$ is a contraction map,
\begin{align}
\|l_t-\hat{l}_t\| &= \left\|\max\left(Nr,\frac{1}{t}(s_1+\cdots+s_t)\right)-\max\left(Nr,\frac{1}{t}(\ell_\tau(c_1)+\cdots + \ell_\tau(c_t)\right)\right\|\nonumber\\
&\leq \left\|\frac{1}{t}(s_1+\cdots+s_t)-\frac{1}{t}(\ell(c_1)+\cdots+\ell(c_t))\right\|\nonumber\\
&\leq \frac{1}{t}\bigl(\norm{s_1-\ell(c_1)}+\cdots+\norm{s_t-\ell(c_t)}\bigr) \leq \tau,  \label{eq:e0}
\end{align}
using $\|s_i-\ell_\tau(c_i)\|\leq \tau$ by definition of $\ell_\tau$.
We next claim that for $t>1$, 
\begin{equation}\label{eq:e1}
(l_t-r)^2 \leq \left(\left(1-\frac{1}{t}\right)(l_{t-1}-r) +\frac{1}{t}(s_t-r)\right)^2.
\end{equation}
For $N=0$, the above is an equality because $l_t=\left(1-\frac{1}{t}\right)l_{t-1} +\frac{1}{t}s_t$. For $N=1$, one can verify the above coordinate-wise using the fact that for any $x,y\in \reals$, $$\bigl(\max(x+y, 0)\bigr)^2 \leq \bigl(\max(x+y, y)\bigr)^2 = \bigl(\max(x,0)+y\bigr)^2.$$ To get (\ref{eq:e1}), sum over coordinates $k\in [K]$ and substitute $x=(s_{1k}+\cdots+s_{(t-1)k}-(t-1)r_k)/t$ and $y=(1/t)(s_{tk}-r_k)$ for each $k$.

Let $u=\ell(\bar{c})$ for any $\bar{c} \in \arg\min_{\bar\calC}(\ell(\bar{c})-r)^2$, i.e., $u$ is the closest point to $r$ in $\ell(\bar\calC)$. By definition of $M_\tau$, 
\begin{align*}
(l_{t-1}-r)(s_{t}-u) &= (\hat{l}_{t-1}-r)(s_t-u) + (l_{t-1}-\hat{l}_{t-1})(s_{t}-u)\\
&\leq \tau\|\hat{l}_{t-1}-r\| + \norm{l_{t-1}-\hat{l}_{t-1}}\cdot \norm{s_{t}-u}
\end{align*}
But since $\|l_{t-1}-\hat{l}_{t-1}\|\leq \tau$ and $\hat{l}_t,l_t, r, u, s_t \in [0,1]^K$, and the diameter of $[0,1]^K$ is $\sqrt{K}$, the above is at most $2\tau\sqrt{K}$, or written differently:
\begin{equation}
(l_{t-1}-r) s_t \leq (l_{t-1}-r)u +  2\tau\sqrt{K}.\label{eq:e2}
\end{equation}
Next, (\ref{eq:e1}) can be rewritten as,
\begin{align*}
(l_t-r)^2 &\leq \left(l_{t-1}-r + \frac{1}{t}(s_t-l_{t-1})\right)^2\\
&=(l_{t-1}-r)^2 + \frac{2}{t}(l_{t-1}-r)(s_t-l_{t-1}) + \frac{1}{t^2}(s_t-l_{t-1})^2\\
&\leq  (l_{t-1}-r)^2 + \frac{2}{t}(l_{t-1}-r)(s_t-l_{t-1}) + \frac{K}{t^2} &\text{// since }s_t, l_{t-1} \in [0,1]^K\\
&\leq (l_{t-1}-r)^2 + \frac{2}{t}(l_{t-1}-r)({\color{blue} u}-l_{t-1}) +{\color{blue} \frac{2}{t} 2\tau\sqrt{K}} + \frac{K}{t^2}    &\text{// by (\ref{eq:e2})}\\
&\leq \left(l_{t-1}-r + \frac{1}{t}(u-l_{t-1})\right)^2 +\frac{2}{t}2\tau\sqrt{K} + \frac{K}{t^2}\\
&= \left(\left(1-\frac{1}{t}\right)(l_{t-1}-r) + \frac{1}{t}(u-r)\right)^2 +\frac{4\tau\sqrt{K}}{t} + \frac{K}{t^2}\\
&\leq \left(1-\frac{1}{t}\right)(l_{t-1}-r)^2 + \frac{1}{t}(u-r)^2 +\frac{4\tau\sqrt{K}}{t} + \frac{K}{t^2} &\text{// by convexity of $(\cdot)^2$}
\end{align*}
Multiplying both sides by $t$, 
$$t(l_t-r)^2 \leq (t-1)(l_{t-1}-r)^2 + (u-r)^2 + 4\tau\sqrt{K} + \frac{K}{t}$$ 
To finish, the above implies, by induction on $\alpha_t = t(l_t-r)^2$ over $t$, 
$$\alpha_t = t(l_t-r)^2 \leq t(u-r)^2 + 4\tau t\sqrt{K} + K\left(1+\frac{1}{2}+\cdots+\frac{1}{t}\right).$$
(Note the base case $\alpha_1=(l_1-r)^2 \leq K$.)
Dividing both sides by $t$ and using the fact that $H_t \leq 1+ \ln(t)$ gives the lemma.
\end{proof}
As mentioned, the above lemma is essentially a re-analysis of the Frank-Wolfe algorithm \citep{revisitingFrankWolfe}
 that handles a comparison set $\bar{\calC}$.

\begin{proof}[Theorem \ref{thm:group}]
Let $\bar{l} \in \arg\min_{\bar{l} \in \ell(\bar\calC)} f(\bar{l})$ be any optimum. Let $\bar{r}$ be a grid point that is within $q\sqrt{K}/2=\beta/2$ of $\bar{l}$, i.e., $\|\bar{r}-\bar{l}\|\leq \beta/2$, which must exist since the grid spacing is $q$. 

We first argue that the outer for loop will not pass $\bar{r}$, i.e., it will return on or before the iteration $r=\bar{r}$. To see this, suppose it reaches $r=\bar{r}$. 
For $l_t$ as defined in the statement of Lemma \ref{lem:finally}, we have that at the end of inner loop, $$(l_T -\bar{r})^2\leq \beta^2/4+4\tau\sqrt{K} + K(1+\ln(T))/T \leq \beta^2 + 4\beta^2 + 2 \beta^2= 7\beta^2,$$
since $4\tau\sqrt{K}= 4\beta^2$ and by our choice of $T$, $K(1+\ln(T))/T \leq 2\beta^2$. Again using $\|l_{T}-\hat{l}_{T}\|\leq \tau$ from (\ref{eq:e0}), this implies,
$$\|\hat{l}_{T}-\bar{r}\| \leq \|l_{T}-\bar{r}\| + \|l_{T}-\hat{l}_{T}\| \leq \sqrt{7}\beta + \tau \leq 3\beta,$$
since $\tau \leq \beta/4$ for $\eps\leq 1$.

Hence we conclude that the algorithm will output some distribution on or before it reaches $r=\bar{r}$. Fix the $r$ for which it returns and let $c$ be the uniform distribution over $c_1,\ldots, c_T$ returned. Let $s_t, l_t$ be as defined in the statement of Lemma \ref{lem:finally}, so $\ell(c)=(s_1+\cdots+s_T)/T$ and $l_T=\max(Nr, \ell(c))$. Since the grid points are sorted, $f(r) \leq f(\bar{r})$. 

To analyze $f(\ell(c))$, the definition of Lipschitz implies, $f(l_T) - f(r) \leq \norm{r-l_T}$, which is $\leq \|\hat{l}_T-r\| + \|\hat{l}_T - l_T\| \leq 3\beta + \tau$. Now, if $N=0$ then $\ell(c)=l_T$ and $f(\ell(c))\leq f(l_T)$. If $N=1$ then $\ell(c)\leq l_T$, and if $f$ is nondecreasing then $f(\ell(c)) \leq f(l_T)$. 
By the Lipschitz condition, and again using the inequality $\norm{l_T-r}\leq 3\beta+\tau$, 
$$f(\ell(c))\leq f(l_T) \leq f(r) + \norm{l_T-r} \leq f(\bar{r}) + 3\beta+\tau \leq f(\bar{l})+\norm{\bar{l}-\bar{r}} + 3\beta+\tau.$$
Since $\norm{\bar{r}-\bar{l}}\leq \beta/2$ and $3.5\beta+\tau<\eps$, this would seem to establish $f(\ell(c)) \leq f(\bar{l})+\eps$ as needed. However, there is a subtle case not covered above in which $N=1$ because $f$ happens to be nondecreasing on the grid but $f$ may be in fact decreasing in places. It is not hard to see that there must be another grid point $r'\leq r$ which is within $3.5\beta+\tau$ of $\ell(c)$, since every point is within $\beta/2$ of some grid point. Moreover, since $f$ is nondecreasing on the grid, $f(r')\leq f(r)$ and the analysis goes through as above with an additional $\beta/2$, which is okay because $4\beta + \tau < \eps$. 

To bound the runtime, note that $|G|\leq q^{-K} = (5/\eps)^K K^{K/2}$, $T = 25K/\eps^2\ln(25K/\eps^2)$, and the number of iterations of the inner loop is at most $|G|T$. The number of calls to each oracle is $\leq |G|T$ (assuming we memoize calls to $\ell_\tau$).
\end{proof}

\subsection{Learning Proofs}\label{ap:learningProofs}
\begin{proof}[Proof of Corollary \ref{cor:classification}]
WLOG $L=1$. For $L>1$, we can apply the theorem to $f/L$ with accuracy $\eps/L$, and $f/L$ is Lipschitz. We apply Theorem \ref{thm:group} to the $K'=2K$ groups $\calZ_k^i = \calX_k \times \{i\} \subseteq \calZ$ for $k\in [K], i\in\{0,1\}$ and loss 
$$\ell_{k}^i(c)=\Pr[c(x)\neq y~|~(x, y) \in \calZ_k^i].$$
Note that $\ell_k^0(c)=\FPR_k(c)$ and $\ell_k^1(c)=\FNR_k(c)$. Hence, all that remains is to show how to implement $\ell_\tau$ and $M_\tau$ for the $Q=\poly(1/\eps)$ total calls to $\ell_\tau$ and $M_\tau$ for classification and regression, for $\tau=\eps^2/(25\sqrt{K})$. By Chernoff bounds, with $\poly(\log 1/\delta, 1/\eps, 1/\pmin)$ examples, clearly we can estimate each of the $T$ calls to $\ell_\tau$ to within $\tau$ accuracy with probability $\geq 1-\delta/Q$.  This is because an expected $1/\pmin$ examples suffice to get an example from any group $\calZ_{k,i}$.

We now describe how to simulate $M_\tau(w)$ for any $w =\langle w_k^i\rangle_{k,i} \in \reals^{2K}$. Let normalization $W = \sum_{k,i} |w^i_k|/p^i_k $, and define $s(x, y, w) = s(z,w) = (1/W) \sum_{k: z \in \calZ_k^i} w^i_k/p_k^i \in [-1,1]$. In these terms, $w \cdot \ell(c) = \E[W s(z,w) |c(x)-y|]$ and hence linear optimizer $M_\tau(w)$ must output $c$ such that,  
\begin{equation}\label{eq:3324}
\E[s(x,y,w) |c(x)-y|] \leq \min_{\bar{c} \in \bar\calC} \E[s(x,y,w) |\bar{c}(x)-y|]+\frac{\tau \|w\|}{W}.
\end{equation}


We use a standard label-flipping trick to simulate positive or negative weights. In particular, consider the distribution $\mu'$ which samples $x,y$ from $\mu$ and outputs $x,y'$ where $y'=y$ with probability $(1+s(x,y,w))/2$ and $y'=1-y$ otherwise. Then, for any $c\in \calC$, the error with respect to $\mu'$ can be written as
\begin{align*}
\E_{x,y' \sim \mu'}[|c(x)-y'|] &= \E_{x,y \sim \mu}\left[\frac{1+s(x,y,w)}{2}|c(x)-y|+\frac{1-s(x,y,w)}{2}(1-|c(x)-y|)\right] \\
&= \E_{x,y \sim \mu}\left[s(x,y,w)|c(x)-y|+\frac{1-s(x,y,w)}{2}\right] 
\end{align*}
This means that for any $c,\bar{c}$,
$$\err(c, \mu')-\err({\bar{c}}, \mu') = \E_{x,y\sim \mu}[s(x,y,w) |c(x)-y|-s(x,y,w) |\bar{c}(x)-y|].$$
So if we feed sufficiently many examples from $\mu'$ (which is easy to simulate given examples from $\mu$) into $M$, with probability $\geq 1-\delta/Q$, $M$ will output $c$ with $\err(c, \mu')-\err({\bar{c}}, \mu')\leq \tau \pmin/\sqrt{K}$ which suffices for (\ref{eq:3324}) because $\tau \pmin/\sqrt{K}\leq \tau\norm{w}/W$ since $W\leq \norm{w}_1/\pmin \leq \norm{w}\sqrt{K}/\pmin$.

Because we are only changing labels, the underlying distribution on $\calX$ doesn't change. By the union bound, all $Q$ calls to $\ell_\tau$ and $M_\tau$ can be answered within $\tau$ tolerance with probability $\geq 1-\delta$ as needed. The calls to the agnostic learning algorithm had accuracy $\eps'=\tau\pmin /\sqrt{K}$ and confidence $\delta'=\delta/Q$ which means that the total runtime is $\poly(1/\eps, 1/\pmin)$.
\end{proof}

Algorithm \ref{alg:regress} repeatedly calls a linear classifier to provably minimize squared error over the family of (bounded) linear combinations of classifiers. We also require nonnegative example weights, since minimizing squared error with negative weights requires exponential time in general.

\begin{lemma}\label{lem:classification-to-regression}
Let $\sigma$ be a distribution over $[0,1]\times \calX \times [-1,1]$, $M$ be an agnostic learner for $\bar{\calC}, \calC$.
Then,  for any $\eps, \delta \in (0,1)$ and any $B>0$, Algorithm \ref{alg:regress} outputs a predictor $h \in \calC_B$ such that, with probability $\geq 1-\delta$, 
$$\E_{w,x,y\sim \sigma}\bigl[w(y-h(x))^2\bigr] \leq \min_{\bar{h} \in \bar{\calC}_B} ~\E_{w,x,y\sim \sigma}\left[w\left(y-\bar{h}(x)\right)^2\right]+\eps.$$
for $\poly(B, 1/\eps, 1/\delta)$ examples and calls to $M$.
\end{lemma}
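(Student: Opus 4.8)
The plan is to recognize \textsc{ClassyRegression} as an approximate, sample-based Frank--Wolfe method for the convex objective $L(h)=\E_{w,x,y\sim\sigma}\bigl[w(y-h(x))^2\bigr]$ over the convex set $\calC_B$, in the same spirit as the re-analysis of Frank--Wolfe behind Lemma \ref{lem:finally}. The update $h_{t+1}=\frac{1}{t+1}(\alpha_1 c_1+\cdots+\alpha_{t+1}c_{t+1})=(1-\gamma_t)h_t+\gamma_t v_t$ with $\gamma_t=\tfrac{1}{t+1}$ and $v_t=\alpha_{t+1}c_{t+1}\in\calC_B$ is exactly a Frank--Wolfe averaging step, and since the gradient of $L$ at $h_t$ behaves like $2w(h_t-y)$, the candidate-selection line is the empirical form of choosing $v_t$ to maximize $\E[w(y-h_t(x))v(x)]$ over $v\in\calC_B$, i.e.\ the Frank--Wolfe linear-minimization step. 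So I would first show the inner loop implements this linear optimization to within a small additive error $\eta$ with high probability, and then run the standard Frank--Wolfe recursion.

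For the first part, fix step $t$ and condition on the history, which determines $h_t$; let $e_t$ denote the random variable $y-h_t(x)$, write $s=\frac{w}{B+1}e_t\in[-1,1]$, so the algorithm sets $y'=1$ with probability $\tfrac{1+s}{2}$. A direct computation gives, for any classifier $c$, that $\E_{y'}\bigl[\,|c(x)-y'|\,\bigr]=\tfrac12+\tfrac{s}{2}-s\,c(x)$ in expectation over $x,y\sim\sigma$ and the label flip, so running $M$ on the flipped batch returns a classifier approximately \emph{maximizing} $\E[s\,c(x)]=\tfrac{1}{B+1}\E[w\,e_t\,c(x)]$, while running $M$ on the complemented labels $1-y'$ returns one approximately \emph{minimizing} it; by the agnostic guarantee each is within $\eps'$ of optimal over $\bar{\calC}$ with probability $\geq 1-\delta/(2T)$. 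Since every $\bar{h}=\sum_i\alpha_i\bar{c}_i\in\bar{\calC}_B$ satisfies $\E[w\,e_t\,\bar{h}(x)]\leq B\max_{\bar{c}\in\bar{\calC}}\bigl|\E[w\,e_t\,\bar{c}(x)]\bigr|$, the two signed candidates $(B,c^+)$ and $(-B,c^-)$ bracket $\max_{\bar{h}\in\bar{\calC}_B}\E[w\,e_t\,\bar{h}(x)]$ up to $O(B^2\eps')$. The fresh batch of $n$ examples is then used only to estimate $\E[w\,e_t\,\alpha c(x)]$ for the two candidates and keep the larger, which by Hoeffding costs an extra $O\bigl(B\sqrt{\log(T/\delta)/n}\bigr)$. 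The conclusion I want: conditioned on the history, with probability $\geq 1-\delta/(2T)$ the chosen $v_t$ obeys $\E[w\,e_t\,v_t]\geq\max_{\bar{h}\in\bar{\calC}_B}\E[w\,e_t\,\bar{h}]-\eta$ with $\eta=O\bigl(B^2\eps'+B\sqrt{\log(T/\delta)/n}\bigr)$.

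Then I would expand the square: with $\gamma_t=\tfrac1{t+1}$, $L(h_{t+1})=L(h_t)-2\gamma_t\E[w\,e_t(v_t-h_t)]+\gamma_t^2\E[w(v_t-h_t)^2]\leq L(h_t)-2\gamma_t\E[w\,e_t(v_t-h_t)]+4B^2\gamma_t^2$, using $v_t,h_t\in[-B,B]$ and $w\leq 1$. Convexity of $L$ gives $\E[w\,e_t(\bar{h}-h_t)]\geq\tfrac12(L(h_t)-L(\bar{h}))$ for the optimal $\bar{h}\in\bar{\calC}_B$, which combined with the oracle bound yields $\E[w\,e_t(v_t-h_t)]\geq\tfrac12(L(h_t)-L(\bar{h}))-\eta$. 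Writing $\delta_t=L(h_t)-L(\bar{h})\geq 0$, this is the recursion $\delta_{t+1}\leq(1-\gamma_t)\delta_t+2\gamma_t\eta+4B^2\gamma_t^2$; multiplying by $t+1$, inducting with base case $\delta_1\leq(B+1)^2$ (since $h_1\in[-B,B]$ and $|y|\leq1$), and using $\sum_{t\leq T}\tfrac1t\leq 1+\ln T$ gives $\delta_T\leq\frac{(B+1)^2+4B^2\ln T}{T}+2\eta$. With $T=\frac{12B^2}{\eps}\ln\frac{12B^2}{\eps}$ the first term is at most $\eps/2$, and taking $\eps'=\Theta(\eps/B^2)$ and $n$ a large enough $\poly(B,1/\eps,1/\delta)$ — at least the agnostic learner's sample size $P(1/\eps',2T/\delta)$ and enough to make the Hoeffding term small — makes $2\eta\leq\eps/2$; a union bound over the $T$ steps keeps the overall failure probability $\leq\delta$, with $|Z|=2nT=\poly(B,1/\eps,1/\delta)$ examples.

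I expect the main obstacle to be the bookkeeping around randomness in the first part: $h_t$, and hence the step-$t$ flipped distribution fed to $M$, is itself random, so $M$'s agnostic guarantee and the Hoeffding estimate must be invoked \emph{conditionally} on the past — which is precisely why the algorithm draws disjoint fresh batches each step and why a union bound over the $T$ iterations (with budget $\delta/(2T)$ for each of the two per-step sub-events) is needed. The other slightly delicate points are verifying the bracketing identity $\max_{g\in\calC_B}\E[w\,e_t\,g]=B\max_{c\in\calC}\bigl|\E[w\,e_t\,c(x)]\bigr|$, so that the two signed candidates genuinely suffice, and checking that the validation step selects the Frank--Wolfe-relevant sign; everything else is the standard Frank--Wolfe computation, essentially as in Lemma \ref{lem:finally}.
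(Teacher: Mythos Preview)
Your proposal is correct and follows essentially the same approach as the paper's proof: the label-flipping identity to turn the agnostic learner into a linear maximizer of $\E[w(y-h_t)\,c(x)]$, the two signed candidates $(B,c^+),(-B,c^-)$ to cover $\bar{\calC}_B$, a fresh-batch Chernoff/Hoeffding estimate to pick between them, and then the Frank--Wolfe style recursion on the squared loss. The only cosmetic difference is that the paper completes the square back to $\bigl((1-\tfrac1t)(h_{t-1}-y)+\tfrac1t(\bar h-y)\bigr)^2$ and applies Jensen for $(\cdot)^2$, whereas you invoke the first-order convexity inequality for $L$; these yield equivalent recursions, and the paper's per-step oracle slack is set to $B^2/T$ exactly so that $2\gamma_t\eta$ folds into the $O(B^2/t^2)$ term.
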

\begin{proof}
Take $\bar{h}\in \bar{C}_B$ that minimizes $\E[(\bar{h}(x)-y)^2]$. 
For the moment, fix any $t$. Let $\mathrm{Candidates}_t=\{(B,c_a), (-B,c_b)\}$. For any $w,x,y\in [0,1] \times \calX \times\{0,1\}$, imagine picking $y'=1$ with probability $\frac{1}{2}+\frac{w}{2(B+1)}\bigl(y - h_t(x)\bigr)$ and 0 otherwise, as in step $t$ of the algorithm. Note that for any $c: \calX \rightarrow \{0,1\}$ and any $w,x,y$,
\begin{align*}
\E_{y'}\bigl[|c(x)-y'|\bigr]&=\left(\frac{1}{2}+\frac{w}{2(B+1)}\bigl(y - h_t(x)\bigr)\right)(1-c(x)) +
\left(\frac{1}{2}-\frac{w}{2(B+1)}\bigl(y - h_t(x)\bigr)\right)c(x)\\
&= \left(\frac{1}{2}+\frac{w}{2(B+1)}\bigl(y - h_t(x)\bigr)\right) - \frac{w}{(B+1)}\bigl(y - h_t(x)\bigr)c(x)
\end{align*}
Note the left term above is independent of $h$. Let ${\bar{c}}\in\bar{\calC}$ be any comparison class classifier. Using this identity for $c \in \{c_a,\bar{c}\}$, and letting $\mu$ be the distribution of $x,y'$ generated for $w,x,y\sim \sigma$, 
\begin{align*}
\err(c_a,\mu)-\err(\bar{c},\mu) &= 
\E_{x,y'\sim \mu}\bigl[|c_a(x)-y'|-|\bar{c}(x)-y'|\bigr]\\
&=\E_{w,x,y\sim \sigma}\left[\frac{w}{(B+1)}\bigl(y - h_t(x)\bigr)\bigl(\bar{c}(x)-c_a(x)\bigr)\right]\\
&\leq\E_{w,x,y\sim \sigma}\left[\frac{w}{B}\bigl(y - h_t(x)\bigr)\bigl(\bar{c}(x)-c_a(x)\bigr)\right]
\end{align*}
By the definition of agnostic learner, for $n = \poly(T, 1/\delta)$, with probability $\geq 1-\delta/(4T)$, this implies that $M$ called on $n$ examples from $\mu$ will have $\err(c_a,\mu)-\err(\bar{c},\mu)\leq 1/2T$, which can be written as:
\begin{equation}
\E_{w,x,y\sim \sigma}[w(y-h_t(x))c_{a}(x)]\geq \max_{\bar{c} \in \bar{\calC}} \E_{w,x,y\sim \sigma}[w(y-h_t(x))\bar{c}(x)] - \frac{B}{2T}\label{eq:yoyo}
\end{equation}
An entirely similar argument for the distribution over $(x,1-y')$ implies that,
\begin{equation}\label{eq:maya}
\E_{w,x,y\sim \sigma}[-w(y-h_t(x))c_{b}(x)]\geq \max_{\bar{c} \in \bar{\calC}} \E_{w,x,y\sim \sigma}[-w(y-h_t(x))\bar{c}(x)] - \frac{B}{2T}
\end{equation}
By Chernoff bounds for $n=\poly(B,1/T,\log 1/\delta)$, with probability $\geq 1-\delta/(2T)$, the choice for $(\alpha_{t+1}, c_{t+1}) \in \{(B,c_a), (-B,c_b)\}$ which has greater empirical correlation $\sum \alpha_{t+1} c_{t+1}(x_i)w_i\bigl(y_i - h_t(x_i)\bigr)$ on $n$ fresh iid examples from $\sigma$ will be within $B^2/T$ of the greater correlated of $\{Bc_a, -Bc_b\}$, i.e.,
$$\E\left[w(y-h_t(x))\alpha_{t+1}c_{t+1}(x)\right] \geq B\max\bigl(\E[w(y-h_t(x))c_{a}(x)], \E[-w(y-h_t(x))c_{b}(x)]\bigr) - \frac{B^2}{T}$$
Now, the above together with (\ref{eq:yoyo}) and (\ref{eq:maya}) and the fact that the maximum over a linear function always occurs at vertex, we get that 
\begin{align*}
\E\left[w(y-h_t(x))\alpha_{t+1}c_{t+1}(x)\right] &\geq \max_{(\bar\alpha, \bar{c}) \in [-B, B] \times \bar{\calC}} ~\E[w\bar\alpha \bar{c}(x)(y-h_t(x))]-\frac{B^2}{T}\\
&=\max_{\bar{h}\in \bar{\calC}_B} \E_{x, y \sim \sigma}[w\bar{h}(x)(y-h_t(x))]-\frac{B^2}{T}.
\end{align*}
We now stop fixing $t$ and look across $t$.  For notational ease, let $s_t(x)=\alpha_t {c_t}(x)$ so the above implies:
\begin{equation}
\E\left[w(y-h_t(x))s_{{t+1}}(x)\right] \geq
\max_{\bar{h}\in \bar{\calC}_B}\E_{x, y \sim \sigma}[w(y-h_t(x))\bar{h}(x)]-\frac{B^2}{T}\label{eq:katie}
\end{equation}
In these terms, we also have $h_t=(1-1/t)h_{t-1} + (1/t)s_t$. In particular, similar to the proof of Lemma \ref{lem:finally},
\begin{align*}
(h_t(x)-y)^2&=\left(h_{t-1}(x)-y+\frac{1}{t}(s_t(x)-h_{t-1}(x))\right)^2\\
&=(h_{t-1}(x)-y)^2+\frac{2}{t}(h_{t-1}(x)-y)(s_t(x)-h_{t-1}(x))+\frac{1}{t^2}(s_t(x)-h_{t-1}(x))^2\\
&\leq(h_{t-1}(x)-y)^2+\frac{2}{t}(h_{t-1}(x)-y)(s_t(x)-h_{t-1}(x))+\frac{4B^2}{t^2}\\
\end{align*}
We now use (\ref{eq:katie}) (negated and for $s_t$) and the fact that $w \in [0,1]$:
\begin{align*}
\E[w(h_t(x)-y)^2]
&\leq\E\left[w(h_{t-1}(x)-y)^2 + \frac{2}{t}w(h_{t-1}(x)-y)({\color{blue} s_t(x)}-h_{t-1}(x)) + \frac{4B^2}{t^2}\right]\\
&\leq\E\left[w(h_{t-1}(x)-y)^2 + \frac{2}{t}w(h_{t-1}(x)-y)({\color{blue} \bar{h}(x)}-h_{t-1}(x)) +{\color{blue} \frac{2}{t} \frac{B^2}{T}} + \frac{4B^2}{t^2}\right]\\
&\leq\E\left[w(h_{t-1}(x)-y)^2 + \frac{2}{t}w(h_{t-1}(x)-y)(\bar{h}(x)-h_{t-1}(x)) +\frac{6B^2}{t^2}\right].
\end{align*}
We now complete the square in the first two terms above, giving,
\begin{align*}
\E[w(h_t(x)-y)^2]&\leq \E\left[w\left((h_{t-1}(x)-y) + \frac{1}{t}(\bar{h}(x)-h_{t-1}(x))\right)^2 + \frac{6B^2}{t^2}\right]\\
&\leq \E\left[w\left(\left(1-\frac{1}{t}\right)(h_{t-1}(x)-y) + \frac{1}{t}(\bar{h}(x)-y)\right)^2 + \frac{6B^2}{t^2}\right].
\end{align*}
Using the convexity of $(\cdot)^2$, this is at most
\begin{align*}
\E[(h_t(x)-y)^2]&\leq \E\left[\left(1-\frac{1}{t}\right)(h_{t-1}(x)-y)^2 + \frac{1}{t}(\bar{h}(x)-y)^2 +\frac{6B^2}{t^2}\right]\\
t\E[(h_t(x)-y)^2]&\leq \E\left[\left(t-1\right)(h_{t-1}(x)-y)^2 + (\bar{h}(x)-y)^2  + \frac{6B^2}{t}\right]
\end{align*}
By induction on $t$ with $\beta_t = t\E[(h_t(x)-y)^2]$,
$$\beta_T = T \E[(h_T(x)-y)^2] \leq T \E[(\bar{h}(x)-y)^2] + 6B^2\left(1+\frac{1}{2} + \ldots + \frac{1}{T}\right)$$
Finally, dividing by $T$, we get,
$$\E[(h_T(x)-y)^2] \leq \E[(\bar{h}(x)-y)^2] +  \frac{6B^2}{T}H_T.$$
Using the fact that harmonic sums $H_T \leq 1+ \ln(T)$ gives the inequality in the proof for $T=\frac{12B^2}{\eps}\ln \frac{12B^2}{\eps}$ and $\eps\leq 1$.

For runtime, the two requirements on the batch size were $n=\poly(T, 1/\delta)$ for the agnostic learner and $n=\poly(B,1/T,\log 1/\delta)$ for Chernoff bounds to determine which of the two predictors was better in each round. Hence, batch size $n=\poly(B,1/\eps, 1/\delta)$ suffices and the total number of examples and calls to $M$ are $\leq 2Tn=\poly(B, 1/\eps, 1/\delta)$.
\end{proof}

\begin{proof}[Proof of Corollary \ref{cor:regression}]
Note that the loss $\ell(h)\in [0,(B+1)^2]^K$ as defined since the total weight on classifiers is at most $B$ and the labels are in $[0,1]$. Theorem \ref{thm:group} can be trivially modified to handle $\ell:\calC \rightarrow [0,(B+1)^2]^K$ and $f: [0,(B+1)^2]^K\rightarrow \reals$ simply scaling down $\ell$. 

The proof is very similar to that of Corollary \ref{cor:classification} above. Again let $Q$ be the total number of calls to $\ell_\tau$ and $M_\tau$. First, WLOG $L=1$, or else we can scale down $f$. Again, it is easy to estimate $\ell_\tau(h)$ using $\poly(B,1/\eps,\log(1/\delta), 1/\pmin)$ examples by Chernoff bounds, with probability $\geq 1-\delta/Q$. 

To compute $M_\tau(w)$ for $w \in \reals_+^K$, again define $W = \sum_k w_k/p_k$ and $s(z,w)=(1/W)\sum_{k: z \in \calZ_k} w_k/p_k$. Now $s(z,w)\in [0,1]$ and again $w \cdot \ell(h) = \E[W s(z,w) (h(x)-y)^2]$. Lemma \ref{lem:classification-to-regression} shows that Algorithm \ref{alg:regress} can be used with $M$ to find $h$ with weighed error within $\eps'=\tau \pmin/\sqrt{K}$ for weight distribution $s(z,w)$, with probability $\geq 1-\delta/Q$. This means that, compared to any $\bar{h}\in \bar{C}_B$, $h$ also satisfies,
$$w \cdot (\ell(h)-\ell(\bar{h})) =
W \E[s(x,y,w) (h(x)-y)^2 - s(x,y,w) (\bar{h}(x)-y)^2] \leq W\frac{\tau\pmin}{\sqrt{K}}\leq \tau \|w\|,$$
which is what is needed to qualify $M_\tau$ as a linear optimizer.
\end{proof}
Although not stated in Corollary \ref{cor:regression}, it is easy to see that the marginal distribution on $\calX$ for the examples fed into the agnostic learner is the same as that of $\mu$.

\section{Proof of Observation \ref{obs:convex}}\label{sec:convex-proof}
\cite{KakadeKL09} handles a more general problem in three ways. As discussed earlier, it handles an online sequence of functions. Second, it covers arbitrary restrictions on weights, not just non-negative. Third, it covers multiplicative $\alpha$ approximation algorithms. To address this last difference, we will first translate our $M_\tau$ with additive $\tau$ error into a $\alpha$-multiplicative approximation algorithm by adding a constant to our loss, so that our additive $\tau$ error translates into a multiplicative $(1+\tau)$ approximation algorithm.
\begin{proof}
We define their set $\calW=\{\frac{2}{3}\}\times [\frac{-1}{3n},\frac{1}{3n}]^n$ ($\calW=\{\frac{2}{3}\}\times [0,\frac{1}{3n}]^n$). Note that $\norm{w}\leq 1$ for $w \in \calW$.
Our feasible choices are $\calC$ (which they call decisions $\calS$). Define $\Phi:\calC \rightarrow \{1\}\times [0,1]^n$ by $\Phi(c) = (1, \ell(c))$.
Clearly $\norm{\Phi(c)}\leq \sqrt{1+n}$ and  $\Phi(c) \cdot w \in [1/3,1]$ for each $c\in \calC,w\in \calW$. As they further require (for convenience), we have assumed that $\ell(\calC)$, and hence $\Phi(\calC)$, is compact. 

Define $\alpha$-approximation algorithm $A: \calW \rightarrow \calS$ by $A(w) = A(2/3, w') = M_\tau(w)$.
For convenience in this proof, for any $w \in \calW$, we write $w' \in \reals^n$ for the last $n$ coordinates of $w$. 
 Using the fact that $\Phi(c) \cdot w = 2/3 + \ell(c) \cdot w' \geq 1/3$ for any $w \in \calW$,
the definition of (nonnegative) linear optimizer guarantees that,
$$\Phi(A(w)) \cdot w \leq 
\min_{c \in \calC} \left(\frac{2}{3} + \ell(c) \cdot w'\right) + \tau\norm{w'}\leq (1+3\tau\norm{w'})\min_{c \in \calC} \left(\frac{2}{3} + \ell(c) \cdot w'\right).$$
Since $\norm{w'} \leq 1/3$, we have that $A$ is a $\alpha$ approximation algorithm for $\alpha = 1+\tau$. 

We apply Algorithm 3.1 of \cite{KakadeKL09} to the sequence $w_1=(2/3, 0,0, \ldots, 0)$ and $w_{t+1}=(2/3, \nabla f(\ell(c_t))/3)$ where $c_t$ is the choice output by the $t$th iteration of their algorithm for $T$ iterations.  We output $\mathrm{UniformDist}\{c_1,\ldots, c_T\}$.
Since by assumption $\nabla f(\ell(c_t)) \in [-1,1]^n$ ($[0,1]^n$), $w_t \in \calW_t$. Let $\bar{c}$ be a minimizer of $f(\ell(c))$, which exists by compactness. Their Theorem 3.2 guarantees,
$$\frac{1}{T} \sum_{t=1}^T \Phi(c_t) \cdot w_t \leq (\alpha + 2)\sqrt{\frac{1+n}{T}}+\alpha \frac{1}{T} \sum_{t=1}^T \Phi(\bar{c}) \cdot w_t,$$ 
for any $\bar{c} \in \calC$, using at most $4(\alpha+2)^2T$ calls to $\Phi$ and $A$ (hence $M_\tau$).
Since $\frac{1}{T}\sum \Phi(\bar{c}) \cdot w_t\leq 1$ and $(1+n)/T\leq 1$, the above implies,
$$\frac{1}{T} \sum_{t=1}^T (\Phi(c_t)-\Phi(\bar{c})) \cdot w_t \leq (\alpha + 2)\sqrt{\frac{1+n}{T}}+\alpha -1 = 
(3+\tau) \sqrt{\frac{1+n}{T}} + \tau.$$ 
On the other hand, by convexity of $f$ and the definition of $\Phi$ and $w_t$, 
\begin{align*}
\frac{1}{T} \sum_{t=1}^T (\Phi(c_t)-\Phi(\bar{c})) \cdot w_t &= \frac{1}{3T} \sum_{t=1}^T (\ell(c_t)-\ell(\bar{c})) \cdot \nabla f(\ell(c_t)) \\
&\geq \frac{1}{3T} \sum_t f(\ell(c_t))-f(\ell(\bar{c}))  \\
&\geq \frac{1}{3}\left(f\left(\frac{\ell(c_1)+\cdots + \ell(c_T)}{T}\right)-f(\ell(\bar{c}))\right)
\end{align*}
The above is a bound on the output on $f(\ell(c))$ for $c=\mathrm{UniformDist}\{c_1,\ldots, c_T\}$. In particular, it shows $f(\ell(c)) -f(\ell(\bar{c}))\leq  3((3+\tau) \sqrt{\frac{1+n}{T}} + \tau) = \eps$ for our choice of $T=36(3+\tau)^2(n+1)/\eps^2 = \poly(1/\eps, n)$ and $\tau=\eps/6$.

Two notes are in order. First, while the \textsc{Approx-Proj} algorithm of \cite{KakadeKL09} is described as a randomized algorithm and Theorem 3.2 is stated in terms of expected performance, this is only a cosmetic difference. This is because, instead of outputting a probability distribution over choices, they output a choice at random from this distribution and bound its expected performance, which is the same by linearity as the expected performance of the probability distribution. Hence, one can run their algorithm and simply keep track of the choice distribution. Second, they define $\calW_+=\{aw~|~a\geq 0, w\in \calW\}$ and require projection oracle onto $\calW_+$. It is not difficult to see that projection on to the set $\calW_+$ for our $\calW$, can be computed efficiently as required. 
\end{proof}

\end{document}